\documentclass[11pt]{article}

\usepackage{arxiv}
\usepackage{booktabs}
\usepackage{graphicx}
\usepackage{amsmath}
\usepackage{amssymb}
\usepackage{amsthm}
\usepackage[hidelinks]{hyperref}
\usepackage{url}
\hypersetup{
  pdftitle={Task- and Session-Level Model Routing: A Common-Interface Hybrid Evaluation of Four Open-Source Routers Across Four Benchmarks},
  pdfauthor={Kiran Kumar and Santhosh Kumar Saminathan},
  pdfsubject={Evaluation of open-source LLM routers},
  pdfkeywords={LLM routing, agentic systems, benchmark evaluation, cost-quality tradeoff}
}

\newtheorem{definition}{Definition}
\newtheorem{lemma}{Lemma}
\newtheorem{proposition}{Proposition}

\title{Task- and Session-Level Model Routing: A Common-Interface Hybrid Evaluation of Four Open-Source Routers Across Four Benchmarks}
\author{
  Kiran Kumar \\
  Indiana University, Bloomington, IN, USA \\
  \texttt{knkumar@iu.edu}
  \and
  Santhosh Kumar Saminathan \\
  Independent Researcher \\
  \texttt{s13.santhosh@gmail.com}
}
\date{}

\begin{document}
\maketitle

\begin{abstract}
Agentic systems increasingly delegate model selection to a router, yet open-source routers are usually evaluated with different tasks, candidate pools, and execution protocols, limiting direct comparison. We present a common measurement protocol and hybrid evaluation of four router implementations across RouterBench, BFCL v4, tau2-bench, and WebArena. We evaluate 290 frozen tasks against a locked matrix of 2,610 candidate outcomes. Three routers emit constant or near-constant tier assignments; only vLLM Semantic Router varies materially with prompt content, and it has the highest observed success rate on none of the four benchmarks. Always-Mid matches Aurelio exactly on three benchmarks and within 0.003 on the fourth. For vLLM, task-level superiority tests detect no task-specific advantage over a share-matched content-blind allocation; equivalence is established only on WebArena at the protocol-declared five-percentage-point margin. The results show that, under these configurations and controls, observed gains track selected-tier composition more closely than demonstrated task-specific targeting. Fixed-tier baselines and selected-tier distributions are therefore necessary controls in router evaluation; the findings are scoped to these configurations, candidate pool, and frozen benchmark samples, not to routing paradigms in general.
\end{abstract}

\keywords{Agentic systems, benchmark evaluation, cost-quality tradeoff, cross-benchmark consistency, LLM routing, Pareto frontier, tool calling}

\providecommand{\appendices}{\appendix}
\providecommand{\appendices}{\appendix}
\section{Introduction}

Model routing is a consequential design choice in modern LLM agents. Each request, and in systems that support it each step of a multi-step trajectory, can be dispatched to one of several candidate models, tools, or policies \cite{routellm}, trading off task quality, dollar cost, latency, and, in agentic settings, tool-call correctness and multi-step policy adherence. The evaluated routers operate at a coarser granularity: they select one model per task or session (Table \ref{tab:granularity}). We therefore do not evaluate per-step routing within a trajectory. When task difficulty varies, routing can reduce cost by assigning suitable tasks to cheaper models while reserving stronger models for tasks on which model choice materially affects success.

An ecosystem of open-source routers has emerged to automate this choice: research systems trained on human preference data \cite{routellm}, production API gateways with fallback and budget controls \cite{litellm}, and embedding-based semantic dispatchers \cite{semanticrouter,vllmsemantic}, among others \cite{llmrouter,nvidiablueprint}. Each targets a different point in the space of cost, quality, latency, and reliability, and these systems are commonly evaluated with different tasks, candidate pools, metrics, and execution protocols, which limits direct comparison.

These heterogeneous evaluations make it difficult to select a router at a specified operating point (quality, cost, latency, tool reliability, task domain) or to judge how sensitive that selection is to the task domain. A single-benchmark evaluation cannot estimate this cross-domain sensitivity. Answering both parts requires four things at once: (a) a common interface any router can sit behind, (b) a benchmark suite spanning single-turn QA through long-horizon agentic tasks, (c) a metrics protocol that reports Pareto frontiers and cross-benchmark consistency instead of a single leaderboard number, and (d) faithful execution of each router and benchmark mode, because behavior under a modeled approximation may not transfer to production execution.

Run side by side this way, at package defaults wired through our specified adapters and on a fixed candidate pool, the routers' more elaborate selection mechanisms produce little separation in the graders' verdicts. Several configuration choices these packages leave open are ours, not the vendors': Aurelio's three reference utterances per tier, its easy/medium/hard framing, and its mid-general fallback default are adapter choices we supplied, and RouteLLM's package-default escalation threshold is applied to a weak/strong candidate pair it was not calibrated on (Appendix, Router Configurations). Aurelio's high fallback rate below therefore follows partly from that sparse utterance set, and should not be read as a property of the packaged router. The claim is scoped to these configurations on this pool, not to the routing paradigms in the abstract.

This paper provides six outputs, matching the deliverables of the accompanying software artifact \texttt{router\_\allowbreak{}benchmark/}:
\begin{enumerate}
\item A measurement protocol for router evaluation, addressing both rank aggregation and task attribution. Proposition \ref{prop:rank} settles the \emph{aggregation} half: once two routers' benchmark rankings invert, no benchmark-independent scalar preserves every benchmark-specific ordering, so per-benchmark reporting is necessary whenever those individual orderings are what a reader needs, and any scalar ranking is well-defined only relative to a traffic weighting or deployment utility declared in advance. The \emph{attribution} half is a separate methodological requirement that the proposition neither implies nor supports: a claim that a router assigns the right task to the right tier must be tested against a share-matched content-blind allocation, because a router's aggregate tier mixture and its task-by-task targeting are distinct sources of any advantage it shows (Sections \ref{sec:protocol} and \ref{sec:results}).
\item A \texttt{Router}/\texttt{Benchmark} adapter interface and an \texttt{EvaluationHarness} that evaluates any list of routers against any list of benchmarks, pair by pair (Section \ref{sec:interface}).
\item A deployment-focused metric suite, each statistic mapped to a decision it informs, plus a Pareto frontier computation over routers (Section \ref{sec:interface}).
\item A hybrid evaluation of four open-source routers against four benchmarks from a curated research shortlist: RouterBench replays logged outcomes, while BFCL v4, tau2-bench, and WebArena execute live model calls through one harness, metrics protocol, and plotting pipeline (Sections \ref{sec:method} and \ref{sec:results}).
\item A principal empirical result. Under the evaluated candidate pool, adapters, defaults, and frozen task samples, a single content-free fixed-tier policy, Always-Mid, reproduces the aggregate-leading router Aurelio Semantic Router's observed success on every benchmark, matching it exactly on RouterBench, BFCL, and WebArena and trailing it by 0.003 on tau2-bench. The wider set of fixed-tier heuristics (Always-Strongest, Always-Cheapest, and the constant-tier routers themselves) matches or exceeds the highest observed router success on three of the four benchmarks (Section \ref{sec:results}). Paired with a share-matched permutation test, this yields the narrower claim the data supports: we detect no statistically significant task-specific success advantage on any benchmark, and equivalence to a content-blind allocation with the same tier shares is \emph{established} only on WebArena, only at the protocol-declared $\pm0.05$ margin, and not at $\pm0.03$; RouterBench, BFCL, and tau2-bench remain inconclusive. The formal share-matched test is run for vLLM Semantic Router, the only router whose assignment varies materially; the near-constant routers are covered instead by a direct paired equivalence test against the fixed tier they reproduce, which is exact wherever their selection is constant.

That is not the same as saying no router adds anything a fixed tier cannot. vLLM Semantic Router's WebArena mixture produces an intermediate cost-quality operating point that no single fixed tier reproduces, and it is nondominated there; what our tests do not detect is a benefit from \emph{which} task gets which tier, as distinct from the mixture itself. Because three of the four routers are constant or near-constant on this pool, the raw-success comparison is largely one fixed tier against another, and it bears on these configurations rather than on task-adaptive routing as a paradigm.
\item A cross-benchmark consistency analysis. Relative rank inverts between benchmarks, an instance of Proposition \ref{prop:rank}, resolved on RouterBench and directional but not statistically settled on BFCL. Rank variance is uninformative in this suite: every router is rank-inverted on exactly one axis, so all four carry an identical variance (0.00 pooled, 0.75 strict). That leaves mean rank as the only separating axis, and the strength of the lead it shows is grouping-dependent even though the identity of the leader is not (Section \ref{sec:results}).
\end{enumerate}

Section \ref{sec:related} reviews related work. Section \ref{sec:protocol} formalizes the evaluation problem. Section \ref{sec:interface} describes the interface and metrics. Section \ref{sec:method} describes the hybrid execution methodology. Section \ref{sec:results} reports results, per benchmark and across benchmarks. Section \ref{sec:discussion} discusses implications and threats. Section \ref{sec:conclusion} concludes.

\section{Related Work}
\label{sec:related}

Prior routing work spans several partly overlapping lines, each addressing a different part of the problem and typically evaluated on a different benchmark. We group it accordingly.

\textbf{Query-level model routing.} RouteLLM \cite{routellm} casts routing as binary weak/strong model selection, trains on human preference data, and evaluates on MMLU and GSM8K; it reports the cost-quality tradeoff curve that this paper adopts as a primary axis. We include RouteLLM as the query-level representative and the only trained preference model among the four evaluated routers. Other query-level systems we cite but do not evaluate include LLMRouter \cite{llmrouter} and the Anyscale tutorial line, which pursue the same query-level selection in smaller models. FrugalGPT \cite{frugalgpt} studies budget-constrained LLM cascades, which we use as the cost-aware baseline concept other than a router.

\textbf{Production routing and gateways.} LiteLLM's router \cite{litellm} treats routing as infrastructure: OpenAI-compatible proxying, load balancing, budget enforcement, and fallback chains, plus usage-based and adaptive policies. LiteLLM Router represents the open-source production-gateway paradigm in our evaluation. We cite the NVIDIA AI Blueprint LLM Router \cite{nvidiablueprint} to show the paradigm is not confined to open-source tooling. Both emphasize gateway functions such as reliability, observability, load balancing, and fallback rather than learned task-difficulty estimation; our per-benchmark results (Section \ref{sec:results}) measure the consequences of that design scope for the routing-behavior analysis this paper reports.

\textbf{Semantic and intent routing.} The vLLM Semantic Router \cite{vllmsemantic} and Aurelio Semantic Router \cite{semanticrouter} route by embedding similarity, matching a prompt to a signal (privacy, cost, latency, safety) or intent \emph{before} generation. This contrasts with LLM-Blender's \cite{llmblender} post-hoc ensembling, which ranks candidate \emph{outputs after} generation; our shortlist deliberately keeps routing before execution separate from output selection after it.

\textbf{Self-evolving and theoretical routing.} EvoRoute \cite{evoroute} routes \emph{inside} long-horizon agent trajectories, refining its own policy over repeated task exposure and reporting Pareto-optimal cost/accuracy/latency tradeoffs. Where EvoRoute is an algorithm operating within an agent's trajectories, ours is an infrastructure and evaluation contribution: a common interface over existing third-party routers, evaluated across one replay benchmark and three live execution benchmarks. Mahmood \cite{mahmoodrouting} models the provider's routing policy and the user's reprompting behavior as a Stackelberg game. We do not test that prediction. We include two deliberately limited cascade controls, but neither is a response-aware cascade that escalates after inspecting a first response; their results are reported in Section \ref{sec:results}.

\textbf{Routing benchmarks.} RouterBench \cite{routerbench}, the closest analogue to a router-only benchmark, logs over 405{,}000 model outcomes and enables offline quality at fixed cost curves without live inference. RouterBench, however, is single-turn and tests neither tool use nor long-horizon agent behavior. RouterEval \cite{routereval} and LLMRouterBench \cite{llmrouterbench} likewise provide large-scale offline records for routing research; LLMRouterBench reports that several routers fail to reliably beat simple baselines under unified evaluation. Shnitzer et al. \cite{shnitzer} repurpose existing datasets to train a router as binary classifiers over single-turn text; RouterBench follows the same strategy of repurposing existing benchmarks. We instead evaluate routers over real runtime payloads: executable tool calls, multi-turn sessions, and browser trajectories, not text collected beforehand.

\textbf{Agentic and tool-use benchmarks.} BFCL \cite{bfcl, bfclv4}, the standard function-calling correctness benchmark, is the natural test of whether a router preserves tool-call correctness when it downgrades to a cheaper model. The tau-bench/tau2-bench line \cite{taubench, tau2bench} tests multi-turn, policy-constrained customer-service agents, probing task \emph{consistency} across turns. WebArena \cite{webarena} evaluates web-navigation agents on self-hosted sites, where a single bad routing decision can cascade across a long browser trajectory. SWE-bench Verified \cite{swebench} and Terminal-Bench 2.0 \cite{terminalbench} probe a still harder long-horizon regime that falls outside our benchmark scope. Evaluation context can also alter apparent agent performance: Mahmood et al. \cite{agentjudge} localize an Agent-as-a-Judge prompt stack across five languages and find that the leading judge backbone changes by language. Their controlled variation differs from our comparison of routers across benchmarks, but it likewise makes the evaluation setting an explicit part of the conclusion.

Unified side-by-side router comparison is not itself new: LLMRouterBench \cite{llmrouterbench} evaluates several routers under one framework and already reports routers failing to reliably beat simple baselines. Unlike prior unified comparisons that score routing decisions against precomputed single-turn records, we evaluate these four implementations (RouteLLM, LiteLLM Router, vLLM Semantic Router, and Aurelio Semantic Router) under a common interface across one replay benchmark and three live execution benchmarks (tool calls, multi-turn policy sessions, and browser trajectories) using a deployment-oriented metrics protocol.

\section{Formal Evaluation Framework}
\label{sec:protocol}

We first formalize the router evaluation problem. We define Pareto dominance and cross-benchmark rank consistency, then show why no benchmark-independent scalar can preserve every benchmark-specific ordering once two routers' rankings invert (Proposition \ref{prop:rank}).

\begin{definition}[Task, candidate, router]
\label{def:task}
A task $t$ is a tuple $(x_t, \tau_t, C_t)$ where $x_t$ is the task input, $\tau_t \in \{0,1\}$ a tool-call requirement flag, and $C_t = \{c_1, \dots, c_k\}$ a finite candidate pool, each candidate characterized by a task-dependent quality and a per-invocation dollar cost and latency. Difficulty descriptors are introduced later as an analysis-only candidate-agreement proxy, not as an input to the routing formalism. A router is a function $R : (t, \mathrm{ctx}, C_t) \mapsto (c^\star, \mathrm{conf}, \mathrm{fb}, \mathrm{meta})$, mapping a task, its context, and the candidate pool to a selected candidate $c^\star \in C_t$, a confidence, a fallback flag, and auxiliary metadata (Section \ref{sec:interface}).
\end{definition}

\begin{definition}[Operating point]
For router $R$ and benchmark $B$, the operating point $m(R,B)$ is the metrics vector defined in Section \ref{sec:interface}, computed by scoring $R$'s decisions on $B$'s task set via $B$'s own $\mathrm{score}(t,\mathrm{decision})$ function; the two coordinates central to this paper are mean success rate $s(R,B) \in [0,1]$ and mean cost per task $c(R,B) \in \mathbb{R}_{\ge 0}$. Zero cost is admissible: a replayed RouterBench cell can carry a zero logged cost, and a transport failure is detected as a zero billed cost.
\end{definition}

\begin{definition}[Pareto dominance and frontier]
\label{def:pareto}
Fix a benchmark or benchmark aggregate $B$. Router $R_1$ dominates router $R_2$, written $R_1 \succ_B R_2$, iff $c(R_1,B) \le c(R_2,B)$ and $s(R_1,B) \ge s(R_2,B)$, with at least one inequality strict. The Pareto frontier $P(\mathcal{R},B)$ over a router set $\mathcal{R}$ is $\{R \in \mathcal{R} : \nexists R' \in \mathcal{R},\, R' \succ_B R\}$.
\end{definition}

\begin{lemma}[Extremal frontier membership]
\label{lem:frontier}
Let $\mathcal{R}$ be evaluated on benchmark $B$. If $R_c = \arg\min_{R\in\mathcal{R}} c(R,B)$ is the unique cost minimizer, then $R_c \in P(\mathcal{R},B)$; symmetrically, if $R_s = \arg\max_{R\in\mathcal{R}} s(R,B)$ is the unique success maximizer, then $R_s \in P(\mathcal{R},B)$.
\end{lemma}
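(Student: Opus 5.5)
The plan is a direct proof by contradiction from Definition \ref{def:pareto}, handling the two claims symmetrically.

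\textbf{Cost minimizer.} Suppose $R_c$ is the unique cost minimizer but $R_c \notin P(\mathcal{R},B)$. Then some $R' \in \mathcal{R}$ satisfies $R' \succ_B R_c$. The first coordinate of dominance gives $c(R',B) \le c(R_c,B)$. Because $R_c$ minimizes cost, also $c(R_c,B) \le c(R',B)$, so the two costs are equal. Since $R' \ne R_c$ (dominance is irreflexive, as the strict inequality cannot hold for a router against itself), $R'$ would be a second cost minimizer. This contradicts uniqueness. The success coordinate is never used; that is fine, because dominance requires the cost inequality regardless of which inequality is strict.

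\textbf{Success maximizer.} The argument mirrors the first. Dominance gives $s(R',B) \ge s(R_s,B)$, and maximality gives the reverse inequality, so the success values are equal. Then $R'$ is a second success maximizer distinct from $R_s$, again contradicting uniqueness.

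\textbf{Main subtlety.} The only point needing care is the role of the uniqueness hypothesis, which replaces any appeal to the strict inequality. Without uniqueness the claim can fail: two routers could tie on minimal cost while one has strictly higher success, and then the other tied minimizer is dominated. I would add a one-line remark noting this, and that in the tie case the frontier still contains some minimizer, namely the one of highest success among the tied routers. I would also note that finiteness of $\mathcal{R}$ is only needed for the argmin and argmax to exist, and that existence is already implicit in the hypothesis. Zero costs, which Definition 2 admits, cause no difficulty, since only order comparisons are used.
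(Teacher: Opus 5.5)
Your proof is correct and is essentially the paper's argument. Both proofs combine the same three ingredients: the cost (respectively success) inequality required by $\succ_B$, the uniqueness hypothesis, and irreflexivity of dominance. The only difference is order: the paper uses uniqueness to force $R'=R_c$ and then contradicts the strict inequality, whereas you use irreflexivity to get $R'\neq R_c$ and then contradict uniqueness.
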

\begin{proof}
Suppose, for contradiction, some $R'$ dominates $R_c$, i.e. $R' \succ_B R_c$. Definition \ref{def:pareto} requires $c(R',B) \le c(R_c,B)$. Since $R_c$ is the unique cost minimizer, no other router can match or beat its cost, so this inequality forces $R' = R_c$. But $R' \succ_B R_c$ makes $R'$ and $R_c$ the same router, and Definition \ref{def:pareto} also requires at least one strict inequality between them, which is impossible when they are identical. So no dominator $R'$ exists and $R_c \in P(\mathcal{R},B)$. The argument for the unique success maximizer $R_s$ is identical, with cost and success swapped.
\end{proof}

Lemma \ref{lem:frontier} is why several frontier memberships in Section \ref{sec:results} are \emph{guaranteed}, not merely observed: LiteLLM Router (the unique cost minimizer \emph{among the four routers} on every benchmark) and the unique success maximizer on each benchmark must lie on their respective frontiers. The uniqueness is relative to the router set: once fixed-tier policies enter the universe, the cheap-only Always-Cheapest matches LiteLLM's cost, so neither is the strict minimizer and both share the guaranteed seat (Section \ref{sec:results}).

\begin{proposition}[Benchmark dependence of router rank]
\label{prop:rank}
Let $R_a, R_b$ be routers and $B_1, B_2$ benchmarks with $s(R_a,B_1) > s(R_b,B_1)$ but $s(R_a,B_2) < s(R_b,B_2)$. Then no benchmark-independent total order $\pi$ on $\{R_a,R_b\}$ agrees with the success-rate order induced by both benchmarks. Consequently, no single \emph{benchmark-independent} scalar (whether success rate on one benchmark or a fixed-weight average across benchmarks) can preserve both benchmark-specific rankings simultaneously. A scalar ranking is still well defined once a weighting or deployment utility is declared, but it is then weighting-dependent rather than a property of the routers alone. Per-benchmark reporting is therefore necessary whenever the individual benchmark orderings are what a reader needs to retain, which is not an expository convenience but the only way to keep them.
\end{proposition}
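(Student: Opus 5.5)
The argument is a direct contradiction on a two-element set, followed by a reinterpretation of "scalar" as a score that induces an order. First, fix a benchmark-independent total order $\pi$ on $\{R_a,R_b\}$. A total order on two elements leaves only two cases: $R_a \succ_\pi R_b$ or $R_b \succ_\pi R_a$; ties are excluded, or, if weak orders are allowed, a tie fails both strict benchmark orders immediately. Agreement with $B_1$ means $\pi$ ranks $R_a$ above $R_b$, since $s(R_a,B_1) > s(R_b,B_1)$. Agreement with $B_2$ means $\pi$ ranks $R_b$ above $R_a$, since $s(R_a,B_2) < s(R_b,B_2)$. By antisymmetry no single $\pi$ satisfies both, which proves the first sentence.

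For the scalar consequence, a benchmark-independent scalar is modeled as a function $f:\mathcal{R}\to\mathbb{R}$ that does not vary with the benchmark under consideration. This covers $f = s(\cdot,B_j)$ for one fixed $j$, and $f(R)=\sum_j w_j\, s(R,B_j)$ for fixed weights $w_j\ge 0$. Any such $f$ induces a weak order on $\{R_a,R_b\}$, and "preserving both benchmark-specific rankings" means $f(R_a)>f(R_b)$ and $f(R_a)<f(R_b)$ simultaneously. The first part rules this out. One can also exhibit the failure concretely for the weighted average. The sign of $f(R_a)-f(R_b) = w_1\Delta_1 + w_2\Delta_2$ is fixed once the weights are, where $\Delta_1>0>\Delta_2$. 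Whichever sign it has, it contradicts one benchmark's ordering, or both if it is zero.

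The remaining sentences are interpretive rather than mathematical, and I would treat them that way. For the claim that a scalar is well defined once a weighting is declared, I would observe that any fixed $w$ yields a legitimate order. Varying $w$ moves the sign of $w_1\Delta_1+w_2\Delta_2$ across zero at $w_1/w_2 = -\Delta_2/\Delta_1$. So the induced order genuinely depends on $w$, which is the sense of "weighting-dependent." The necessity of per-benchmark reporting follows because the only objects that recover both orderings are the per-benchmark values themselves, or equivalent data.

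The main obstacle is this last step: pinning down precisely what "benchmark-independent scalar" and "preserve" mean, so that the consequence is a theorem rather than a slogan. I would handle it by stating these definitions explicitly before the argument. I would also note that the claim concerns retaining both orderings at once, not whether a declared utility is a useful summary.
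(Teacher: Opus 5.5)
Your proposal is correct and follows the paper's own argument: the two benchmarks force opposite orders on a two-element set, so no total order satisfies both, and because any scalar induces an order, no scalar can preserve both rankings either. Your sign computation for $w_1\Delta_1+w_2\Delta_2$ and the crossover at $w_1/w_2=-\Delta_2/\Delta_1$ correctly elaborate the weighting-dependence remark, which the paper states without proof.
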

\begin{proof}
The first benchmark requires $\pi$ to rank $R_a$ above $R_b$, while the second requires the reverse. No total order can satisfy both. Any scalar induces such an order, so no scalar can preserve both rankings simultaneously.
\end{proof}

Proposition \ref{prop:rank} formalizes an aggregation constraint: once two systems' benchmark-specific orders invert, no benchmark-independent scalar can preserve both orders simultaneously. Benchmark-independent scalars still exist, and we report several below, but each induces an order that disagrees with at least one benchmark-specific order unless its traffic weighting or deployment utility is declared. This aggregation result does not imply the separate attribution requirement to compare task-level assignments with share-matched content-blind baselines; that requirement is evaluated in Section \ref{sec:results}.

Section \ref{sec:results} instantiates Proposition \ref{prop:rank} with observed, not modeled, data. Take $R_a = $ Aurelio Semantic Router, $R_b = $ LiteLLM Router, $B_1 = $ RouterBench, $B_2 = $ BFCL: the observed success rates satisfy $s(\text{Aurelio},B_1) = 0.683 > s(\text{LiteLLM},B_1) = 0.300$ yet $s(\text{Aurelio},B_2) = 0.811 < s(\text{LiteLLM},B_2) = 0.833$. The hypothesis is met by the observed point estimates, so the observed reversal is a property of those point estimates rather than an artifact of how they are aggregated. Its two legs are resolved unequally, however. The RouterBench gap ($0.683$ vs.\ $0.300$) is large and well separated, whereas the BFCL gap ($0.811$ vs.\ $0.833$; Aurelio minus LiteLLM $-0.022$, [$-0.133$, $0.100$], $p=1.000$) is not statistically distinguishable under the primary adjusted test. We therefore read this as a resolved reversal on RouterBench and a directional, statistically unresolved one on BFCL, not an established population-level rank flip on both axes.

\begin{definition}[Cross-benchmark consistency]
\label{def:consistency}
Given router set $\mathcal{R}$ and benchmark set $\mathcal{B} = \{B_1, \dots, B_n\}$, let $\mathrm{rank}(R,B_i) \in \{1,\dots,|\mathcal{R}|\}$ denote $R$'s rank by success rate on $B_i$ (1 = best; tied routers receive the average of the tied rank positions). Define $R$'s mean rank $\bar\rho(R) = \frac{1}{n}\sum_i \mathrm{rank}(R,B_i)$ and its rank variance $\nu(R) = \frac{1}{n}\sum_i (\mathrm{rank}(R,B_i) - \bar\rho(R))^2$. A router with low $\bar\rho(R)$ and low $\nu(R)$ is both strong and consistent across the benchmark set; low $\bar\rho(R)$ alone, the quantity a single benchmark leaderboard reports, cannot distinguish a consistently good router from one that is excellent on the benchmarks it happens to be evaluated on and poor elsewhere.
\end{definition}

Definition \ref{def:consistency} supplies the metric Section \ref{sec:results} uses to organize this paper's central empirical claim.

\section{Implementation and Reported Metrics}
\label{sec:interface}

\subsection{Common Adapter Interface}
The abstractions above map onto executable adapters as follows. Each router implements a single-method contract corresponding to Definition \ref{def:task}: a \texttt{route} call over a task, its context, and the candidate pool returns a route decision consisting of a selected candidate, a confidence, a fallback flag, and metadata. Each benchmark implements two methods: \texttt{generate\_tasks()}, which emits a fixed list of \texttt{Task} objects (each carrying a domain, a difficulty in $[0,1]$, a tool-call requirement flag, and a candidate pool), and \texttt{score(task, decision)}, which returns success, cost, latency, and, where applicable, tool-call correctness. This mirrors the shortlist's own recommended protocol: normalize all routers behind one adapter, log the candidate set, route decision, confidence, fallback, cost, tokens, and final outcome.

\subsection{Evaluation Harness}
The harness method \texttt{evaluate} takes any list of router adapters and any list of benchmark adapters, evaluates every (router, benchmark) pair over a configurable number of trials with deterministic per (router, benchmark, trial) seeding, and returns a tidy table, one row per (router, benchmark, task, trial). This is the single point of integration: adding a router or benchmark requires no changes to the harness, metric, or plotting code.

\subsection{Metrics}
For every (router, benchmark) pair we compute a deployment-focused suite, with each statistic linked to a deployment decision: success rate, the primary task outcome axis $s(R,B)$; cost per task and cost per successful task (USD), reported separately because they answer different questions; latency; a fallback rate; a route-stability measure across repeated trials; and, on the tool-required benchmarks, tool-call accuracy. Aggregating across benchmarks per router, we add the Pareto frontier of Definition \ref{def:pareto} and the cross-benchmark mean rank and rank variance of Definition \ref{def:consistency}. The last two add what a single benchmark leaderboard cannot report: how much a router's standing moves as the task domain changes.

\section{Methodology}
\label{sec:method}

\subsection{Router and Benchmark Selection}
We evaluate four routers (RouteLLM \cite{routellm}, LiteLLM Router \cite{litellm}, vLLM Semantic Router \cite{vllmsemantic}, and Aurelio Semantic Router \cite{semanticrouter}) against four benchmarks: RouterBench \cite{routerbench}, BFCL v4 \cite{bfcl, bfclv4}, tau2-bench \cite{taubench, tau2bench}, and WebArena \cite{webarena}. The four routers cover three common routing paradigms. \emph{Static rules and heuristics}: in the evaluated configuration, LiteLLM Router uses programmatic logic, load balancing, and cost-limit fallbacks, and does not inspect prompt content. \emph{Semantic similarity}: vLLM and Aurelio Semantic Routers embed the incoming prompt and match it against a database of intents or historical queries. \emph{Trained preference models}: RouteLLM uses ML classifiers (matrix factorization) to predict whether a cheaper model can match a stronger one on the specific prompt. These paradigms differ in the information available to a routing decision, static rules, topical matching, or capability prediction, a distinction the results (Section \ref{sec:results}) return to.

\subsection{Task Selection}
\label{sec:task-selection}
Benchmark task selection can materially affect router comparisons, so Table \ref{tab:task-selection} states it per benchmark: the eligible population, how many tasks were drawn from it, by what method, under what seed, and what the population excludes. Two of the four are uniform random samples under a fixed seed and two are deterministic prefixes of a filtered pool.

None is stratified: we did not balance the sample on difficulty, domain, or any outcome-correlated attribute, so RouterBench and BFCL are unstratified random samples from their eligible populations. The tau2-bench and WebArena sets are deterministic convenience prefixes taken in shipped file order, chosen so that the selected task IDs are reproducible without a seeded draw over a pool whose order may itself encode domain or difficulty. They therefore do not support design-based generalization to the full tau2-bench or WebArena populations, and every inference below is conditional on the evaluated task IDs.

The eligible populations are narrower than the benchmarks themselves, and the narrowing is a scope decision rather than a result. BFCL v4 contributes only its single-turn \texttt{simple} categories, which is the slice our grader covers, so the multi-turn, parallel, and irrelevance categories never entered the pool. tau2-bench contributes only its retail domain. WebArena contributes only tasks that run on exactly one of the two sites we host under pinned images, gitlab and shopping, which reduces its 812 tasks to 367 before any selection. RouterBench alone is sampled from its full published table.

Sample sizes were set in advance based on API cost constraints rather than a power calculation, yielding 30 (BFCL), 60 (RouterBench), and 100 (tau2-bench and WebArena) tasks. These sizes limit the statistical resolution of the pairwise comparisons in Section \ref{sec:results}.

No tasks were dropped after selection. The candidate matrix contains all $290 \times 3 \times 3 = 2{,}610$ candidate-outcome rows (290 tasks $\times$ 3 candidates $\times$ 3 replicates) with no missing cells, cache-served results, or execution failures. Two related events are recorded rather than excluded. A provider that rejects a BFCL function schema returns no tool call, and the harness grades that as a failed task instead of dropping it. WebArena's candidate and route rows were regenerated after a browser-revision repair, which replaced those rows rather than removing tasks; the frozen task IDs are unchanged and the other three benchmarks' rows were retained.

\begin{table*}[t]
\caption{Task selection per benchmark. \emph{Eligible population} is the set a task could be drawn from after the scope filters in the last column; the benchmarks' full sizes are 36{,}497, 550 in the covered categories, 114, and 812. No sample is stratified, and no task was removed after selection.}
\label{tab:task-selection}
\centering
\small
\setlength{\tabcolsep}{4pt}
\begin{tabular}{@{}lp{0.21\textwidth}rp{0.16\textwidth}lp{0.25\textwidth}@{}}
\toprule
Benchmark & Eligible population & Sel. & Selection method & Seed & Population exclusions \\
\midrule
RouterBench & 36{,}497 logged prompts & 60 & Uniform random, without replacement & 1234 & None; the full published table is eligible \\
BFCL v4 & 550 single-turn \texttt{simple} items (400 Python, 100 Java, 50 JavaScript) with published ground truth & 30 & Uniform random, without replacement & 1234 & Multi-turn, parallel, and irrelevance categories. The adapter also drops any item lacking shipped ground truth, which removes none of the 550 \\
tau2-bench & 114 retail-domain tasks & 100 & First 100 in shipped file order & Deterministic & All non-retail domains \\
WebArena & 367 single-site gitlab or shopping tasks & 100 & First 100 in pool order (66 shopping, 34 gitlab) & Deterministic & Multi-site tasks; all sites without a pinned self-hosted image \\
\bottomrule
\end{tabular}
\end{table*}

\subsection{Hybrid Execution Protocol}
The protocol is hybrid in a specific sense: for every benchmark we build a shared outcome matrix first and then replay each router's selected tier against that matrix, so router decisions are live but the graded outcome they inherit is a saved one. What differs across benchmarks is how the matrix was produced. RouterBench uses outcomes logged by the benchmark's authors, so no new model call is made for the answer. For BFCL v4, tau2-bench, and WebArena, we execute every candidate model on every frozen task to populate the shared three-tier outcome matrix. Router decisions are then replayed by joining the selected tier to its saved candidate outcomes. The candidate pool is cheap-small $=$ gpt-5.4-nano, mid-general $=$ claude-sonnet-4-6, and strong-frontier $=$ claude-opus-4-8, under a pricing snapshot dated 2026-07-02. The router packages run through the common interface of Section \ref{sec:interface}; the adapters implement \texttt{route()}, \texttt{generate\_tasks()}, and \texttt{score()} against the specified task data and graders. Every task-candidate cell stores three outcome replicates and each router takes two routing trials per task; replay joins each selected route to a saved candidate outcome, fixing the outcome seen by each router decision and preventing a router-specific call from changing the selected tier's score. The comparative matrix contains $290 \times 3 \times 3 = 2{,}610$ candidate-outcome rows and $290 \times 4 \times 2 = 2{,}320$ route rows; prompt-output caching is disabled so replicates do not inherit one another's outcomes. The joined-row count $N_{\mathrm{joined}}$ represents tasks $\times$ two routing trials $\times$ three replicates, while the task count $T$ is the primary unit of inference (Section \ref{sec:estimand}).

This design is candidate generation plus counterfactual route replay rather than fully online end-to-end router execution, and the difference bounds what the comparison can show. Because every router draws from one shared matrix, no router can be advantaged or disadvantaged by generation noise specific to its own calls. In exchange, the replay does not capture router-specific prompt transformations, dynamic fallback after generation, request timing, provider load, or context conditioned on the route taken. A single saved outcome is also reused across every router decision that selects the same cell, which is what makes the comparison paired but also means the replicates are shared rather than independent across routers.

Table \ref{tab:granularity} tabulates a distinction the prose leaves implicit: at what granularity each router decides, and what each grader checks. Every benchmark routes at the task or session level, never per turn or per step within a task. On WebArena, \texttt{webarena\_live.py} selects one model before the environment executes the entire multi-step trajectory; on tau2-bench, the router picks one \texttt{--agent-llm} for the whole multi-turn session. We audited all four adapters against their package traces, checking that each per-task decision was produced by the package's own mechanism rather than by adapter-side substitution or approximation; the checks and their outcomes are recorded with the released artifact (Appendix, Reproducibility). The WebArena environment uses a pinned browser build (Playwright 1.32.1, Chromium revision 1055); a revision mismatch in an earlier execution was repaired before this run, and only WebArena candidate and route rows were regenerated while the three other benchmarks' validated rows were retained unchanged.

\begin{table}[t]
\caption{Routing granularity and grading per benchmark.}
\label{tab:granularity}
\centering
\small
\begin{tabular}{@{}lll@{}}
\toprule
Benchmark & Route frequency & Grader \\
\midrule
RouterBench & Once per task & Logged-outcome lookup \\
BFCL v4 & Once per task & Exact/membership check \\
tau2-bench & Once per session & tau2 reward + action check \\
WebArena & Once per task & WebArena official evaluator \\
\bottomrule
\end{tabular}
\end{table}

\subsection{Estimand and Inference}
\label{sec:estimand}
\textbf{Estimand.} Our primary estimand is the mean paired per-task success difference over the frozen benchmark tasks,
\[
\delta \;=\; \frac{1}{T}\sum_{i=1}^{T}\bigl(\mathbb{E}[y_A(i)] - \mathbb{E}[y_B(i)]\bigr),
\]
where $T$ is the number of frozen tasks and $y_A(i)$, $y_B(i)$ are the graded outcomes of policies $A$ and $B$ on task $i$. Each inner expectation is taken over a task-candidate cell's three outcome replicates and over the router's two routing trials, and is estimated by the observed average of those replicates and trials. We treat the replicates as samples from the configured generation and routing procedures, so inference is conditional on the selected tasks, candidate pool, adapters, configurations, and execution period rather than generalizing to the benchmarks' full populations. Every reported point estimate is the plug-in version of $\delta$.

\textbf{Interval construction.} The interval is a hierarchical paired bootstrap over the frozen tasks, matched to that target. One draw resamples tasks with replacement and then independently resamples output replicates within each distinct task-candidate cell. Both policies receive the same task draw; they share replicate draws only when they select the same saved cell, because replicate ordinals from different models or providers are not paired observations. The two routing trials are averaged through their selected cells within each sampled task. We use both stages to represent task-sampling variation and observed within-cell output variation, under an assumption that replicates are exchangeable within each task-candidate cell. Resampling tasks alone would omit the output randomness averaged by the estimate, whereas drawing a single replicate per task would charge that randomness at full weight to a quantity already averaged over three replicates. The procedure does not represent shared temporal, provider, or infrastructure variation across calls. RouterBench is unaffected either way because its replicates are identical by construction. Because the task sets are frozen rather than redrawn, these are conditional resampling intervals over the evaluated task IDs; the effective sample size is the number of frozen tasks (30, 60, 100, 100) rather than the joined-row count $n$ (180, 360, 600, 600).

\textbf{Superiority test.} The primary test is a two-sided task-level sign-flip test on the same per-task differences averaged by the point estimate. Its null requires the task-level difference distribution to be symmetric about zero. That assumption is not guaranteed by the design, because router assignments are deterministic rather than randomized. We therefore interpret it as model-based inference conditional on the frozen task set, not as a design-based randomization test. Ten thousand independent sign flips form the reference distribution, with an add-one correction. A zero difference is invariant under a sign flip, so no tie convention is required. We report three sensitivity tests on the same data and Holm-adjust them within the same family: a bootstrap sign-tail measure, an exact paired sign test on task-level discordant counts, and a null-centred cluster bootstrap. The exact sign test discards effect magnitude and therefore does not require symmetry of magnitudes, although it answers the narrower sign-null question. The four constructions agree on 22 of 24 router-pair comparisons. For the two WebArena vLLM-versus-cheap comparisons, only the sign-tail sensitivity yields an adjusted $p<0.05$; the primary sign-flip test and the other two sensitivities do not, so those effects are treated as unresolved.

\textbf{Equivalence test.} Failing to reject the superiority null is not evidence that an effect is absent, so equivalence is tested directly rather than read off a non-significant $p$. We use two one-sided tests (TOST) on the paired per-task differences, with null $H_0: |\delta| \ge \Delta$ and a primary margin $\Delta = 0.05$; a verdict of practical equivalence requires the adjusted TOST $p$ to fall below $0.05$. The $\pm0.05$ margin is a protocol-declared primary equivalence margin rather than derived from an external standard of a minimum important difference; tighter absolute and benchmark-relative margins are reported as sensitivities in Table \ref{tab:margin-sensitivity}.

\textbf{Multiplicity and reporting conventions.} Intervals are marginal intervals for a single comparison, at 95\% for superiority contrasts and 90\% for the TOST contrasts they accompany. Superiority $p$-values are Holm-adjusted across the six protocol-declared within-benchmark router pairs, and equivalence $p$-values are Holm-adjusted within their own family across the four benchmarks; the two families are adjusted separately, and each margin in the sensitivity table forms its own family. A marginal interval can therefore exclude zero while the multiplicity-adjusted $p$-value does not clear $0.05$, which we flag where it occurs. Throughout, we describe an ordering of point estimates as \emph{observed}, reserve \emph{statistically distinguishable} for a contrast clearing the primary adjusted test, use \emph{equivalent within $\Delta$} only for a cleared TOST, and call a comparison \emph{unresolved} when the evidence supports neither.

\section{Results}
\label{sec:results}

This section proceeds in four steps. We first describe the candidate substrate every router selects over, then report per-benchmark router outcomes, then place those outcomes against fixed-tier controls and cost- and latency-aware summaries, and finally test whether any observed advantage is attributable to task-specific targeting rather than to tier shares. Reading the steps in that order matters, because the controls determine how much of a router's outcome its selection rule can be credited with.

Table \ref{tab:candidate-tiers} reports candidate-tier success before any routing. Three benchmarks share one ordering: mid-general has a substantially higher observed success rate than cheap-small (at least double on RouterBench and WebArena, about $1.5\times$ on tau2-bench). The increment from mid to strong is smaller than the increment from cheap to mid on RouterBench (0.117 versus 0.283), zero on WebArena, and slightly negative on tau2-bench. BFCL v4 shows the opposite ordering, with cheap-small attaining the highest observed success rate. Throughout, ``strong-frontier'' names a price tier rather than a claim about capability on a given benchmark; BFCL's reversal is why the two must be kept apart. This ordering motivates the fixed-tier controls examined below.

\begin{table}[t]
\caption{Candidate success before routing in the four benchmarks. Each row includes three outcomes per task.}
\label{tab:candidate-tiers}
\centering
\small
\setlength{\tabcolsep}{4pt}
\begin{tabular}{@{}llrr@{}}
\toprule
Benchmark & Candidate & n & Success \\
\midrule
RouterBench & cheap / mid / strong & 180 each & .300 / .683 / .800 \\
BFCL v4 & cheap / mid / strong & 90 each & .833 / .811 / .800 \\
tau2-bench & cheap / mid / strong & 300 each & .543 / .810 / .797 \\
WebArena & cheap / mid / strong & 300 each & .110 / .220 / .220 \\
\bottomrule
\end{tabular}
\end{table}

\begin{table*}[t]
\caption{Canonical joined router outcomes across the four benchmarks (success, mean selected-candidate cost, mean router-service cost, all USD).}
\label{tab:router-main}
\centering
\resizebox{\textwidth}{!}{\begin{tabular}{@{}llrrrr@{}}
\toprule
Benchmark & Router & Joined rows & Success & Candidate USD & Router USD \\
\midrule
BFCL v4 & Aurelio Semantic & 180 & 0.811 & 0.0036 & 0.0100 \\
BFCL v4 & LiteLLM & 180 & 0.833 & 0.0001 & 0.0000 \\
BFCL v4 & RouteLLM & 180 & 0.833 & 0.0001 & 0.0100 \\
BFCL v4 & vLLM Semantic & 180 & 0.772 & 0.0019 & 0.0500 \\
RouterBench & Aurelio Semantic & 360 & 0.683 & 0.0003 & 0.0100 \\
RouterBench & LiteLLM & 360 & 0.300 & 0.0000 & 0.0000 \\
RouterBench & RouteLLM & 360 & 0.300 & 0.0000 & 0.0100 \\
RouterBench & vLLM Semantic & 360 & 0.400 & 0.0004 & 0.0500 \\
WebArena & Aurelio Semantic & 600 & 0.220 & 0.0873 & 0.0100 \\
WebArena & LiteLLM & 600 & 0.110 & 0.0021 & 0.0000 \\
WebArena & RouteLLM & 600 & 0.110 & 0.0021 & 0.0100 \\
WebArena & vLLM Semantic & 600 & 0.150 & 0.0256 & 0.0500 \\
tau2-bench & Aurelio Semantic & 600 & 0.813 & 0.2555 & 0.0100 \\
tau2-bench & LiteLLM & 600 & 0.543 & 0.0047 & 0.0000 \\
tau2-bench & RouteLLM & 600 & 0.543 & 0.0047 & 0.0100 \\
tau2-bench & vLLM Semantic & 600 & 0.730 & 0.2262 & 0.0500 \\
\bottomrule
\end{tabular}
}
\end{table*}

\subsection{RouterBench and BFCL v4}
We start with single-turn QA and function calling, the setting the ecosystem's own leaderboards report. On RouterBench (60 tasks, two trials, three replicates, $N_{\mathrm{joined}}=360$ joined rows), Aurelio Semantic Router selects mid-general for every route and reaches 0.683 success; LiteLLM Router and RouteLLM select cheap-small throughout and reach 0.300; vLLM Semantic Router mixes tiers (81.7\% cheap, 1.7\% mid, 16.7\% strong) for 0.400. Aurelio exceeds each cheap-only router by 0.383, with a 95\% paired-bootstrap interval of [0.233, 0.533] and Holm-adjusted $p<0.01$; it exceeds vLLM by 0.283 [0.133, 0.433], also $p<0.01$. Always-Strongest reaches 0.800 on RouterBench, the highest observed success among the evaluated routers and fixed-tier baselines, and Lemma \ref{lem:frontier} guarantees it a frontier seat as the success maximizer.

BFCL v4 reverses the tier ordering. Across 30 tasks ($N_{\mathrm{joined}}=180$ joined rows), cheap-small has the highest observed tier success at 0.833; LiteLLM Router and RouteLLM select it on every route and also reach 0.833, while Aurelio's constant mid-general selection yields 0.811 (Aurelio minus LiteLLM $-0.022$, [$-0.133$, 0.100], $p=1.000$) and vLLM reaches 0.772. LiteLLM minus vLLM is $+0.061$ [$-0.011$, 0.167], not significant ($p=1.000$); once the trials and replicates are averaged within the 30 tasks, no pair of routers differs significantly on BFCL under any of the four constructions of Section \ref{sec:estimand}. The route-equivalence audit rescored 447 same-tier BFCL outcomes with the pinned \texttt{bfcl-eval==2026.3.23} checker and reports zero conflicts, so the BFCL ordering is not an artifact of router-specific grading. This is the concrete instance of Proposition \ref{prop:rank}: the router with the highest observed RouterBench success has the lower observed success on BFCL, so no \emph{benchmark-independent} scalar can preserve both benchmark-specific orders. Weighted scalars remain perfectly well defined, and we report several below; what the proposition rules out is a scalar that is a property of the routers alone.

\subsection{tau2-bench}
The multi-turn picture matches RouterBench's tier ordering. Across 100 tau2-bench sessions ($N_{\mathrm{joined}}=600$ joined rows), cheap-small reaches 0.543 and mid-general 0.810. Aurelio selects mid-general on 97.0\% of routes and reaches 0.813, exceeding LiteLLM and RouteLLM by 0.270 [0.180, 0.360], $p<0.01$; vLLM mixes cheap/mid/strong at 36/44/20\% for 0.730, and Aurelio minus vLLM is 0.083 [0.017, 0.153], which clears the six-way within-benchmark correction ($p=0.028$) and holds under all three sensitivity constructions. Because Aurelio selects mid-general on 97\% of its tau2-bench routes, this comparison resolves the mid tier against vLLM's 36/44/20 mixture, not one routing rule against another. Cost separates sharply here: mean selected-candidate cost is \$0.2555 per joined outcome for Aurelio against \$0.0047 for LiteLLM. Because both policies route every task and the joined-outcome expansion is identical across policies, the ratio is the same on a per-task basis. This gap follows from selecting the mid tier rather than from a pooled utility comparison; the protocol declares no exchange rate between success and spend.

\subsection{WebArena}
WebArena evaluates long browser trajectories and has the lowest observed success rates of the four benchmarks under this candidate pool. Across 100 tasks ($N_{\mathrm{joined}}=600$ joined rows), cheap-small reaches 0.110 and mid-general and strong-frontier both 0.220. Aurelio selects mid-general on every route and reaches 0.220, matching Always-Strongest and exceeding the cheap-only LiteLLM and RouteLLM (each 0.110) by 0.110, [0.040, 0.187]; that is the one WebArena router pair clearing the six-way within-benchmark correction (Holm-adjusted $p=0.020$). vLLM mixes 66\% cheap and 34\% mid for 0.150, above the cheap-only routers by 0.040 [0.010, 0.077] ($p=0.135$, not significant after adjustment) and below Aurelio by 0.070 [0.003, 0.140] ($p=0.135$). The vLLM-over-cheap comparison is the one place in the suite where the test constructions disagree: the bootstrap sign-tail sensitivity test resolves it ($p=0.017$), whereas the permutation test, the exact sign test, and the null-centred bootstrap do not ($p=0.135$, $p=0.094$, $p=0.100$). We report it as unresolved. On WebArena, then, the mid-over-cheap tier gap is established, while the rest of the four-way ordering is directional and not resolved after multiplicity control: 100 tasks separate the extremes of the tier range but not the routers packed between them. Note what the resolved comparison is and is not evidence for. Aurelio selects mid-general on every WebArena route, so this is the mid tier beating the cheap tier, not a routing rule beating another routing rule. Two caveats qualify the numbers. First, all 200 of Aurelio's WebArena routes are declared fallbacks: its reference-utterance set has no coverage for browser-navigation language, so its mid-general selection is a fallback default, not a content-sensitive decision, that happens to coincide with the better tier. Second, absolute success is low for every router; the benchmark discriminates the tier orderings the routers induce without implying any router solves it.

\subsection{Baselines, Empirical One-Call Ceiling, and Cascades}
A useful router need not beat a fixed tier on raw success. It could match the strong tier's quality at lower cost, or hold quality while cutting spend, so raw-success parity is not by itself a verdict, and we return to cost- and latency-aware comparison in Section \ref{sec:expected-utility}. On raw success, though, the routers' observed rates are close to those of fixed-tier policies. Table \ref{tab:baselines} compares the four routers against three fixed-tier baselines (Always-Cheapest, Always-Mid, Always-Strongest) and an empirical one-call ceiling: the tier with the highest observed three-replicate mean for each task. This is a descriptive ceiling within the frozen three-candidate, one-call matrix, not a population upper bound. Selecting a maximum from three noisy means is optimistic, and a multi-call policy can exceed it. It adds zero new \emph{experimental} API cost because the matrix already runs every tier, but a deployed policy could not observe all three outcomes for free. A single fixed-tier policy, Always-Mid, reproduces the per-benchmark success of Aurelio, the router with the best cross-benchmark mean rank. Always-Mid matches it exactly on RouterBench (0.683), BFCL (0.811), and WebArena (0.220) and trails it by just 0.003 on tau2-bench (0.810 vs.\ 0.813), because Aurelio is itself a mid-tier policy through its fallback default, departing from mid-general on only a handful of routes. On tau2-bench, where a router appeared to exceed every fixed tier, Aurelio's observed margin over the best fixed tier is 0.003, and its 0.016 margin over Always-Strongest has a 95\% paired-bootstrap interval of $[-0.043, +0.080]$ that straddles zero. Read the other direction, on three of the four benchmarks a fixed-tier baseline matches or beats every router. On RouterBench Always-Strongest (0.800) beats all four including Aurelio (0.683); on BFCL cheap-small is the strictly best tier so Always-Cheapest (0.833) leads (the two cheap-selecting routers tie it); and on WebArena Always-Strongest (0.220) ties Aurelio. None of this requires a routing decision: on tau2-bench the mid tier (0.810) itself edges the strong tier (0.797), so Aurelio's mid-general default inherits that small, within-noise advantage. The empirical one-call ceiling (RouterBench 0.917, BFCL 0.878, tau2-bench 0.910, WebArena 0.277) describes remaining observed headroom inside this matrix. It is retrospective and outcome-aware rather than a policy any deployable router could run. On WebArena even this within-matrix ceiling is only 0.277: under the evaluated candidate pool and agent setup, better one-call tier assignment alone would not close the gap. This does not isolate intrinsic task difficulty from the candidate models, scaffolding, action parsing, or trajectory limits, any of which could cap the observed ceiling.

\begin{table}[t]
\caption{Success rate: four routers vs.\ three fixed-tier baselines vs.\ the empirical one-call ceiling. Best deployable value per column in bold. Always-Mid reproduces the per-benchmark success of Aurelio, the cross-benchmark mean-rank leader; Aurelio does not lead every benchmark, and on RouterBench the best tier is strong, not the mid tier it defaults to}
\label{tab:baselines}
\centering
\small
\begin{tabular}{@{}lrrrr@{}}
\toprule
Router / Baseline & RouterB. & BFCL & tau2 & WebA. \\
\midrule
Aurelio Semantic & .683 & .811 & \textbf{.813} & \textbf{.220} \\
LiteLLM Router & .300 & \textbf{.833} & .543 & .110 \\
RouteLLM & .300 & \textbf{.833} & .543 & .110 \\
vLLM Semantic & .400 & .772 & .730 & .150 \\
\midrule
Always-Cheapest & .300 & \textbf{.833} & .543 & .110 \\
Always-Mid & .683 & .811 & .810 & \textbf{.220} \\
Always-Strongest & \textbf{.800} & .800 & .797 & \textbf{.220} \\
\midrule
Empirical one-call ceiling & .917 & .878 & .910 & .277 \\
\bottomrule
\end{tabular}
\end{table}

The deployable transport-failure cascade (the FrugalGPT strategy \cite{frugalgpt} under the trigger tested here) does not improve on the fixed tiers: every call returned and the signal never fired, so it stayed cheap. As an oracle upper bound, not a deployable baseline, an \emph{idealized} cascade that escalates cheap$\rightarrow$mid$\rightarrow$strong using the real grader's success verdict as its trigger reaches 0.917/0.878/0.927/0.290 across the four benchmarks, at or above the empirical one-call ceiling. A deployed system cannot observe ground-truth success before generation. Because it may fire several model calls per task, its success is not comparable to a one-call router's; Table \ref{tab:cascade-operating} reports what it spends to get there, 1.3 to 2.6 model calls per task on average, and a success-per-dollar that falls from roughly 900 on RouterBench to about 1.6 on WebArena as the cascade is forced deep into the expensive tiers. At the other extreme, a \emph{transport-failure fallback} that escalates only on a pre-answer infrastructure-failure signal (a call that does not return, detected here as a zero billed cost) is identical to Always-Cheapest (0.300/0.833/0.543/0.110): in this run every model call was billed and returned, so the signal never fired and the cheap-first policy never left the cheap tier. This is deliberately the weakest trigger. A genuine quality-aware cascade escalates \emph{after} inspecting the first response, using schema or tool-call validation, a verifier or confidence model, format or policy checks, or self-consistency. We do not evaluate one here, so these numbers bound only the transport-failure variant and are not evidence that response-aware cascades cannot help.  Because the trigger and the tier pool were not varied independently, the experiment does not isolate their separate effects.

\begin{table*}[t]
\caption{Idealized cascade operating cost alongside success. Average model calls per task, mean summed latency, cost per task, and success-per-dollar (success rate divided by cost per task); RouterBench is replay, so no generation latency is logged.}
\label{tab:cascade-operating}
\centering
\small
\resizebox{\textwidth}{!}{\begin{tabular}{@{}lrrrrr@{}}
\toprule
Benchmark & Success & Avg calls & Latency (s) & USD/task & Success/USD \\
\midrule
RouterBench & 0.917 & 1.95 & 0.0 & 0.00099 & 924.8 \\
BFCL & 0.878 & 1.29 & 2.0 & 0.00151 & 579.5 \\
tau2-bench & 0.927 & 1.59 & 80.4 & 0.23241 & 4.0 \\
WebArena & 0.290 & 2.63 & 288.3 & 0.18543 & 1.6 \\
\bottomrule
\end{tabular}
}
\end{table*}

\subsection{Why Pareto Optimality Alone Does Not Suffice}
Pareto-frontier membership is informative but incomplete: it is binary, and it indicates neither the magnitude by which a non-member missed nor whether a member qualifies through higher success or merely through lower cost. LiteLLM Router, the unique cost minimizer among the four routers, is nondominated on all four benchmarks by Lemma \ref{lem:frontier}, with frontier probability 1.000 throughout, including on RouterBench and WebArena where its 0.300 and 0.110 success are the worst or tied-worst observed. Aurelio's frontier probability is 1.000 on RouterBench, 0.996 on tau2-bench, 0.981 on WebArena, but only 0.307 on BFCL, where it is dominated. vLLM is frontier-guaranteed nowhere and sits at 0.799 on tau2-bench and 0.997 on WebArena. Pareto membership therefore reports a policy that qualifies through lower cost and one that qualifies through higher success identically, which is why the paper reports cross-benchmark rank consistency (Definition \ref{def:consistency}) as a separate metric rather than relying on frontier membership alone.

The frontier above is drawn over the routers only. The deployment-relevant question is whether a router is nondominated once fixed-tier policies are included in the comparison set. Table \ref{tab:pareto-all-policy} recomputes nondominance over all seven policies on the same hierarchical task-clustered draws, under two cost bases, because one of the headline results turns on which basis is used. Under \emph{candidate $+$ service fee}, each router pays its flat nominal per-route charge and the fixed tiers pay none; under \emph{candidate cost only}, the incremental routing fee is zero for everyone.

One result is basis-independent. Always-Cheapest ties LiteLLM at the cost floor with probability 1.000 under both bases and on every benchmark, so LiteLLM's guaranteed seat is not unique: a fixed declaration holds it too. The rest is not. Under the fee basis Aurelio is dominated on RouterBench, BFCL, and WebArena (nondominance 0.000); under the candidate basis it is nondominated on RouterBench (1.000) and WebArena (0.977) and keeps 0.918 on tau2-bench, and only BFCL survives as a dominated position (0.306). The reason is mechanical: On RouterBench, BFCL, and WebArena, Aurelio and Always-Mid have identical success and candidate spend. Under the fee basis, the \$0.01 router charge is their entire separation on those three benchmarks. Tau2-bench is the exception because Aurelio assigns 3\% of routes outside the mid tier. Two of those three dominance results are therefore created by the assumed fee. The BFCL result is not: cheap-small is both the best and the cheapest tier there, so Always-Cheapest dominates the mid-tier policies (Aurelio 0.306, Always-Mid 0.306) for a substantive reason that survives a zero fee. RouteLLM shows the same fee artifact in reverse, going from 0.000 everywhere under the fee basis to 1.000 everywhere without it, since it is cheap-locked and its only cost gap against Always-Cheapest is the fee.

Two fairness questions follow and we do not have data to settle either. The fee is a configurable nominal charge, not a measured incremental decision cost, and it is charged asymmetrically: a fixed-tier policy in production still runs a gateway, and that common overhead is charged to no policy here. Infrastructure cost was not recorded, so a fourth basis using measured routing overhead is not available. Table \ref{tab:pareto-all-policy} should accordingly be read as a scenario analysis over the two bases it reports, not as an empirical deployment frontier.

Comparing routers to the fixed-policy frontier requires distinguishing three conditions: dominating a single fixed policy, dominating all fixed policies, or dominating the fixed-policy Pareto frontier. Here, the first condition is met only under candidate-cost-only accounting on tau2-bench, where Aurelio's success (0.813 at \$0.2555) slightly exceeds Always-Mid (0.810 at \$0.2562). However, this difference of 0.003 is equivalent within the protocol-declared $\pm0.05$ margin (Table \ref{tab:policy-equivalence}). No evaluated router satisfies the third and strongest condition of dominating the fixed-policy frontier, as at least one fixed-tier option (such as Always-Cheapest) remains nondominated in every scenario.

\begin{table*}[t]
\caption{All-policy Pareto nondominance (fraction of 10{,}000 hierarchical task-clustered bootstrap draws in which each policy is nondominated), over the four routers and three fixed-tier baselines jointly, under two cost bases. Left: candidate spend plus the flat nominal router-service fee. Right: candidate spend only. Best per column in bold.}
\label{tab:pareto-all-policy}
\centering
\small
\begin{tabular}{@{}lrrrrrrrr@{}}
\toprule
 & \multicolumn{4}{c}{Candidate $+$ service fee} & \multicolumn{4}{c}{Candidate cost only} \\
\cmidrule(lr){2-5} \cmidrule(lr){6-9}
Policy & RouterB. & BFCL & tau2 & WebA. & RouterB. & BFCL & tau2 & WebA. \\
\midrule
Aurelio Sem. & 0.000 & 0.000 & 0.707 & 0.000 & \textbf{1.000} & 0.306 & 0.918 & 0.977 \\
vLLM Sem. & 0.000 & 0.022 & 0.749 & 0.997 & 0.054 & 0.068 & 0.929 & 0.997 \\
LiteLLM & \textbf{1.000} & \textbf{1.000} & \textbf{1.000} & \textbf{1.000} & \textbf{1.000} & \textbf{1.000} & \textbf{1.000} & \textbf{1.000} \\
RouteLLM & 0.000 & 0.000 & 0.000 & 0.000 & \textbf{1.000} & \textbf{1.000} & \textbf{1.000} & \textbf{1.000} \\
\midrule
Always-Cheap & \textbf{1.000} & \textbf{1.000} & \textbf{1.000} & \textbf{1.000} & \textbf{1.000} & \textbf{1.000} & \textbf{1.000} & \textbf{1.000} \\
Always-Mid & \textbf{1.000} & 0.307 & 0.730 & 0.977 & \textbf{1.000} & 0.306 & 0.433 & 0.977 \\
Always-Strong & 0.946 & 0.085 & 0.310 & 0.473 & 0.946 & 0.085 & 0.310 & 0.473 \\
\bottomrule
\end{tabular}

\end{table*}

\subsection{Selected-Candidate Distribution}
The benchmark-level rankings are closely tracked by one quantity: how each router splits traffic across the three tiers. Table \ref{tab:distribution} reports that split, with each router's mean confidence and fallback rate. Confidence values are package-specific outputs and are not calibrated to a common scale, so they are comparable within a router across benchmarks but not across routers as probabilities. LiteLLM Router sends cheap-small 100\% of the time on every benchmark, at confidence 1.0 and fallback rate 0; its cost-based mechanism reads no prompt content by design, since it optimizes cost and reliability rather than task difficulty, so on the task-capability axis these benchmarks measure it behaves as a cost floor rather than a content router. RouteLLM's trained weak/strong classifier \emph{also} selects cheap-small 100\% of the time; its non-zero, sub-one confidence (0.78--0.78) shows it is scoring content, but it never crosses its own escalation threshold on this pool. Aurelio Semantic Router selects mid-general almost always, but through its fallback path: its fallback rate is 1.0 on BFCL and WebArena, 0.97 on tau2-bench, and 0.65 on RouterBench, because its reference-utterance set rarely matches these prompts. Only vLLM Semantic Router mixes all three tiers substantially: 82\% and 70\% cheap on RouterBench and BFCL, escalating only the remaining 18\% and 30\%, and wider splits on tau2-bench (36/44/20\%) and WebArena (66/34/0\%), the benchmarks whose richer task content feeds its classifier signal. Three of the four routers behave as constant or near-constant selectors, and the one that varies substantially has the highest observed success on none of the benchmarks. This is the routing behavior underlying the rankings: Aurelio attains the highest observed success on three benchmarks with almost no content-dependent routing, because its fallback default lands on a tier that substantially outperforms the cheap tier selected by its router competitors, and the fixed-tier heuristics keep pace because where a router's selection is constant, an unconditional tier pick is not a separate baseline but the same policy under a different name.

\begin{table*}[t]
\caption{Selected-candidate distribution (fraction of routes per tier), mean confidence, and fallback rate.}
\label{tab:distribution}
\centering
\small
\begin{tabular}{@{}llrrrrr@{}}
\toprule
Router & Bench & ch & mid & str & Conf & Fb \\
\midrule
LiteLLM & all & 1.00 & .00 & .00 & 1.00 & .00 \\
RouteLLM & all & 1.00 & .00 & .00 & .78 & .00 \\
vLLM Sem. & RouterB. & .82 & .02 & .17 & .70 & .00 \\
vLLM Sem. & BFCL & .70 & .08 & .22 & .69 & .02 \\
vLLM Sem. & tau2 & .36 & .44 & .20 & .70 & .00 \\
vLLM Sem. & WebA. & .66 & .34 & .00 & .46 & .34 \\
Aurelio Sem. & RouterB. & .00 & 1.00 & .00 & .13 & .65 \\
Aurelio Sem. & BFCL & .00 & 1.00 & .00 & .00 & 1.00 \\
Aurelio Sem. & tau2 & .02 & .97 & .01 & .01 & .97 \\
Aurelio Sem. & WebA. & .00 & 1.00 & .00 & .00 & 1.00 \\
\bottomrule
\end{tabular}
\end{table*}

Selection variability is not the same as useful routing. vLLM is the only router whose tier choice varies with the prompt, but a router can vary its selection and still assign tiers no better than chance. The right comparison is therefore not a fixed tier but a \emph{content-blind allocation with vLLM's own tier shares}: for each task $i$, the share-weighted expectation $b_i=\sum_t p_t\,y_i(t)$, where $p_t$ is vLLM's realized share of tier $t$ and $y_i(t)$ task $i$'s outcome under tier $t$. We compare its realized assignment with that baseline by holding the per-benchmark tier mixture fixed and permuting which task receives which tier over 10{,}000 draws (Table \ref{tab:canonical-vllm-share-permutation}). The task is the permutation block, not the individual route row: a task's two routing trials select the same tier on 289 of 290 tasks, so permuting them independently would treat $2T$ correlated positions as $2T$ independent draws and understate the reference spread. Because the observed tier labels are deterministic functions of task content rather than randomized treatments, this is a descriptive, frozen-task reference distribution under a task-label exchangeability assumption, not a design-based randomization test or a causal estimate of targeting.

Superiority and equivalence are separate questions and we report them separately, because a non-significant test against zero is an absence of evidence, not evidence of absence. Under this frozen-task permutation reference, no benchmark shows an advantage: the effect is $+0.010$ on RouterBench (Holm-adjusted $p=1.000$), $+0.019$ on tau2-bench ($p=1.000$), $+0.003$ on WebArena ($p=1.000$), and $-0.052$ on BFCL ($p=0.472$), if anything on the wrong side of the null. On equivalence we test the bound directly rather than reading it off a non-significant $p$. The permutation null mean is exactly the mean of the per-task content-blind expectations $b_i$, so the effect equals the mean of the paired per-task differences $a_i-b_i$. Its interval uses the same hierarchical procedure as the main pairwise analysis: each draw resamples tasks and independently resamples the three saved outcomes within each distinct task--tier cell, then recomputes both $a_i$ and $b_i$. We combine that interval with a two-one-sided-tests (TOST) equivalence test at the protocol-declared $\pm0.05$ primary margin, Holm-adjusted across the four benchmarks like the superiority tests. Only WebArena clears it (Holm-adjusted TOST $p=0.016$; hierarchical 90\% interval [$-0.025$, $+0.032$], inside the margin). RouterBench ($p=0.450$), tau2-bench ($p=0.450$), and BFCL ($p=0.519$) are inconclusive: their intervals reach past $\pm0.05$, so an alignment benefit of practical size is not excluded there, only unobserved. A non-significant permutation $p$ on tau2-bench is not equivalence, and the paired analysis does not deliver one there. The defensible reading is that vLLM's aggregate tier mixture, not its task-by-task assignment, accounts for its result on the one benchmark where the bound is established, and that the remaining three leave the question open rather than settled. One caveat on the margin itself: 0.05 is an absolute five success-rate points applied uniformly, which is about a third of WebArena's 0.147 content-blind base rate but about a fourteenth of tau2-bench's 0.711. WebArena is the sharpest case on the mechanism (its tiers genuinely differ, cheap 0.110 versus mid 0.220, so alignment there is worth something and the 66/34 cheap/mid split still buys nothing beyond its share) and simultaneously the case where the absolute margin is loosest in relative terms.
To evaluate the sensitivity of this equivalence claim to the choice of margin, Table \ref{tab:margin-sensitivity} reports equivalence tests at tighter bands. The single equivalence verdict in this suite does not survive tightening. WebArena is equivalent at the protocol-declared $\pm0.05$ (Holm-adjusted TOST $p=0.016$) but inconclusive at $\pm0.03$ ($p=0.241$) and at $\pm0.02$ ($p=0.644$), and inconclusive under a relative margin of ten percent of its own content-blind success, which is only $\pm0.015$ there. No benchmark is equivalent at any band tighter than the protocol-declared one. So the honest statement of the strongest result in this analysis is bounded twice over: on WebArena the targeting benefit is bounded inside five success-rate points, it is \emph{not} bounded inside three, and on the other three benchmarks it is not bounded at all. We keep $\pm0.05$ as the primary verdict because it was protocol-declared rather than chosen after seeing these numbers, and we report the sensitivity because a verdict that holds only at the loosest band it was granted should not be read as a tight one.

\begin{table*}[t]
\caption{vLLM share-matched tier-label reference analysis. \emph{Actual} is vLLM's observed success; \emph{Blind} the share-matched content-blind expectation; \emph{Effect} their difference. The 90\% interval hierarchically resamples tasks and output replicates; C/M/S is the cheap/mid/strong route count. Sup.\ $p$ is the two-sided task-label reference test; TOST $p$ the equivalence test at the protocol-declared $\pm0.05$ margin, both Holm-adjusted. No benchmark shows a statistically resolved task-specific success advantage; only WebArena establishes practical equivalence at the declared margin.}
\label{tab:canonical-vllm-share-permutation}
\centering
\small
\begin{tabular}{@{}lrrrrrrrl@{}}
\toprule
Benchmark & C/M/S & Actual & Blind & Effect & Effect 90\% & Sup.\ $p$ & TOST $p$ & Verdict \\
\midrule
BFCL v4 & 42/5/13 & 0.772 & 0.824 & -0.052 & [-0.126, 0.010] & 0.472 & 0.519 & inconclusive \\
RouterBench & 98/2/20 & 0.400 & 0.390 & 0.010 & [-0.056, 0.082] & 1.000 & 0.450 & inconclusive \\
WebArena & 132/68/0 & 0.150 & 0.147 & 0.003 & [-0.025, 0.032] & 1.000 & 0.016 & equivalent \\
tau2-bench & 72/88/40 & 0.730 & 0.711 & 0.019 & [-0.031, 0.068] & 1.000 & 0.450 & inconclusive \\
\bottomrule
\end{tabular}

\end{table*}

\begin{table*}[t]
\caption{Equivalence verdict as a function of the declared margin. The effect and its 90\% interval are identical across columns by construction, since only the band moves; each cell gives the Holm-adjusted TOST $p$ within that margin family, and equivalence is exactly $p<0.05$, marked $\ast$. The relative column sets each benchmark's band to ten percent of its own content-blind success, given alongside. The suite's one equivalence verdict, WebArena at the protocol-declared $\pm0.05$, does not survive tightening to $\pm0.03$.}
\label{tab:margin-sensitivity}
\centering
\small
\begin{tabular}{@{}llllll@{}}
\toprule
Bench & Effect [90\%] & $\pm$0.02 & $\pm$0.03 & $\pm$0.05 & rel.\ 10\% ($\pm$band) \\
\midrule
BFCL v4 & -0.052 [-0.124, 0.010] & $p>0.999$ & $p=0.963$ & $p=0.519$ & $p=0.677$ ($\pm$0.082) \\
RouterBench & 0.010 [-0.056, 0.081] & $p>0.999$ & $p=0.963$ & $p=0.350$ & $p=0.677$ ($\pm$0.039) \\
WebArena & 0.003 [-0.025, 0.032] & $p=0.644$ & $p=0.241$ & $p=0.016$\,$^{\ast}$ & $p=0.713$ ($\pm$0.015) \\
tau2-bench & 0.019 [-0.031, 0.068] & $p>0.999$ & $p=0.957$ & $p=0.450$ & $p=0.167$ ($\pm$0.071) \\
\bottomrule
\end{tabular}

\end{table*}

\subsection{Cross-Benchmark Consistency}
Section \ref{sec:results} showed Pareto membership cannot distinguish a graded rank reversal from a clean win or loss. Definition \ref{def:consistency} applies mean rank $\bar\rho$ and rank variance $\nu$ instead. Table \ref{tab:rank} reports both, under two groupings: treating RouterBench and BFCL as one combined suite (unweighted mean, matching the single-turn QA/function-calling axis) against tau2-bench and WebArena, and treating all four benchmarks as independent suites. The grouping changes the strength of Aurelio's lead and the variance attributed to it, not the identity of the leading router, exactly Proposition \ref{prop:rank}'s point that the choice of aggregation is a modeling decision. Under the three-suite grouping Aurelio has point rank 1 in each of the three pooled suites: mean rank 1.00 and rank variance 0.00, with vLLM second (2.00) and the two cheap-only routers tied last (3.50). The three-suite grouping pools RouterBench and BFCL on the shared single-turn QA and function-calling axis; because that pooling choice changes both the reported lead and the variance, we treat the three-suite view as exploratory and the four-suite view as the less aggregated analysis. Under the strict four-suite grouping the point ranks give Aurelio mean rank 1.50 and rank variance 0.75, the same variance every router carries once each benchmark contributes independently, since each is rank-inverted on exactly one axis. Rank variance does not distinguish the routers under either reported grouping: it is 0.00 for every router under the pooled grouping and 0.75 for every router under the strict one. Mean rank is the operative discriminator, and Aurelio's standing rests on holding the best one while carrying a variance no better than any competitor's. These point ranks are a discontinuous summary: a rank is a step function of the underlying success rates, so a within-noise success gap can move a router a full integer position and rewrite the variance. They are plug-in estimates computed from the observed sample, not a draw from any distribution; the bootstrap distribution below is generated by resampling that same sample, and we treat it as primary because it exposes how little the point estimates resolve. Propagating the hierarchical task-clustered resampling into the ranks (Table \ref{tab:rank-bootstrap}, the same 10{,}000 draws as the paired tests) leaves the mean-rank ordering intact but exposes its resolution: Aurelio holds the strictly best mean rank in 99\% of draws ($\bar\rho = 1.38$, [1.00, 1.75]), yet its 0.75 point variance is itself fragile, with a bootstrap percentile interval [0.00, 1.69] that reaches down to the clean-dominance value 0. The fragility is concentrated on BFCL, where Aurelio's rank ranges over all four positions and its success rate falls below LiteLLM's in only 58\% of draws, a near coin flip rather than a settled loss. The reversal Proposition \ref{prop:rank} characterizes is therefore present in the point estimates and its RouterBench leg is statistically distinguishable, but the four-way ranking and the strict-grouping variance are directional summaries whose apparent precision the bootstrap distribution does not support. To state the grouping dependence precisely, since it is easy to overstate: the magnitude of Aurelio's lead and the variance attributed to it depend on whether RouterBench and BFCL are pooled, while the identity of the mean-rank leader does not change under either reported grouping.

\begin{table}[t]
\caption{Cross-benchmark rank consistency (Definition \ref{def:consistency}). Rank 1 = best; lower mean rank and lower variance are better.}
\label{tab:rank}
\centering
\small
\begin{tabular}{@{}lrrrr@{}}
\toprule
& \multicolumn{2}{c}{3-suite} & \multicolumn{2}{c}{4-suite} \\
\cmidrule(lr){2-3}\cmidrule(lr){4-5}
Router & $\bar\rho$ & $\nu$ & $\bar\rho$ & $\nu$ \\
\midrule
Aurelio Semantic & \textbf{1.00} & \textbf{0.00} & \textbf{1.50} & 0.75 \\
vLLM Semantic & 2.00 & 0.00 & 2.50 & 0.75 \\
LiteLLM Router & 3.50 & 0.00 & 3.00 & 0.75 \\
RouteLLM & 3.50 & 0.00 & 3.00 & 0.75 \\
\bottomrule
\end{tabular}
\end{table}

\begin{table}[t]
\caption{Bootstrap distribution of the strict four-suite rank statistics of Definition \ref{def:consistency}, from the same 10{,}000 hierarchical task-clustered bootstrap draws as the paired risk-difference tests. $\bar\rho$ is the mean rank and $\nu$ the rank variance; brackets are 95\% bootstrap percentile intervals rather than inferential coverage statements, since rank statistics are discontinuous. The last column is the bootstrap fraction of within-task resamples in which the router holds the strictly lowest mean rank, conditional on the selected benchmarks, task sets, candidate pool, and configurations; it is neither a posterior probability nor uncertainty over those higher-level choices. Table \ref{tab:rank}'s point ranks are the plug-in estimates computed from the observed sample, the sample this distribution resamples.}
\label{tab:rank-bootstrap}
\centering
\small

\resizebox{\columnwidth}{!}{
\begin{tabular}{@{}lccc@{}}
\toprule
Router & $\bar\rho$ [95\% PI] & $\nu$ [95\% PI] & Frac.\ best \\
\midrule
Aurelio Semantic & \textbf{1.38} [1.00, 1.75] & 0.66 [0.00, 1.69] & \textbf{0.99} \\
vLLM Semantic & 2.39 [2.00, 2.50] & 0.57 [0.00, 0.75] & 0.01 \\
LiteLLM Router & 3.11 [3.00, 3.50] & 0.50 [0.00, 0.75] & 0.00 \\
RouteLLM & 3.11 [3.00, 3.50] & 0.50 [0.00, 0.75] & 0.00 \\
\bottomrule
\end{tabular}}
\end{table}

\subsection{Expected Success Under Benchmark Traffic Mixtures}
The cross-benchmark metric weights every suite equally, but equal benchmark weighting need not represent any deployment's task distribution, and a weighted average assumes one unit of success is worth the same on each benchmark. Table \ref{tab:mixtures} reweights the per-benchmark success rates under five named mixtures; no new experiments are run. These mixtures are precisely the weighting-dependent scalars Proposition \ref{prop:rank} permits: each is well defined once its benchmark weights are fixed, and which router leads can change with those weights. Aurelio Semantic Router leads every mixture \emph{here} (0.683 on pure RouterBench, 0.748 on the RouterBench/BFCL/tau2 mix, 0.632 uniform, 0.385 WebArena-heavy, and 0.696 inverse-average-cost weighted) because the mid tier it defaults to substantially outperforms the cheap tier that LiteLLM Router and RouteLLM select, on RouterBench, tau2-bench, and WebArena. That is a statement about the tier its competitors hold, not about mid being the best available tier: of those three benchmarks mid is the outright best tier only on tau2-bench (0.810 against strong-frontier's 0.797), it ties strong-frontier on WebArena (0.220 each), and it trails strong-frontier on RouterBench (0.683 against 0.800), while on BFCL cheap-small is the best tier outright (Table \ref{tab:candidate-tiers}). Two caveats apply. The first is that this table ranks the four routers only. Extending the uniform mixture to the fixed-tier policies provides one illustrative all-policy comparison: Always-Strongest reaches 0.654 above Aurelio's 0.632 and Always-Mid reaches 0.631 immediately below it. Only the inverse-average-cost weighting depends on the evaluated router set, so Aurelio's sweep here is a router-only lead of the kind Table \ref{tab:baselines} and Section \ref{sec:expected-utility} already qualify. The second applies to the last column. The ``inverse-average-cost weighted'' mixture weights each benchmark inversely to its \emph{mean} cost across routers, which is a property of the benchmark mix rather than of any router. It reweights success without charging a router for its own spend, so Aurelio leads it despite costing 42.2$\times$ more per task than the cheap-only routers on WebArena and 54.5$\times$ on tau2-bench on candidate cost alone (Table \ref{tab:router-main}); that second ratio falls to 7.7$\times$ once the tau2 user-simulator spend both policies pay is charged (Section \ref{sec:expected-utility}). A ranking that does price router cost and latency is deferred to Section \ref{sec:expected-utility}. LiteLLM Router and RouteLLM trail every column, held down by RouterBench (0.300); vLLM sits between. Aurelio wins every router-only \emph{success} mixture while making almost no content-dependent decisions, subject to the fixed-tier and utility qualifications above and below.

\begin{table}[t]
\caption{Expected success rate under named benchmark traffic mixtures, over the four routers only. Benchmark order is (RouterBench, BFCL v4, tau2-bench, WebArena). Exact weights: pure RouterBench $(1,0,0,0)$; fixed 50/25/25 $(0.50,0.25,0.25,0)$; uniform $(0.25,0.25,0.25,0.25)$; WebArena-heavy $(0,0,0.25,0.75)$; inverse-average-cost $(0.87225,0.12211,0,0.00567)$. Extending the uniform mixture to the fixed-tier policies provides one illustrative all-policy comparison; only the inverse-average-cost weighting depends on the evaluated router set.}
\label{tab:mixtures}
\centering
\small
\setlength{\tabcolsep}{4pt}
\begin{tabular}{@{}lrrrrr@{}}
\toprule
Router & 100\% RB & 50/25/25 & Unif. & WebA.\ hv & Cost c. \\
\midrule
Aurelio Sem. & \textbf{.683} & \textbf{.748} & \textbf{.632} & \textbf{.385} & \textbf{.696} \\
vLLM Sem. & .400 & .576 & .513 & .295 & .444 \\
LiteLLM & .300 & .494 & .447 & .245 & .364 \\
RouteLLM & .300 & .494 & .447 & .245 & .364 \\
\bottomrule
\end{tabular}
\end{table}

\subsection{A Cost- and Latency-Aware Utility}
\label{sec:expected-utility}
Table \ref{tab:mixtures} reweights success only; it carries no router cost, latency, or value-of-success term, so it cannot express a genuine deployment utility. We therefore define a scalar utility that incorporates success, monetary cost, and latency,
\[
U(R)=\sum_b w_b\bigl[V_b\,s(R,b)-\lambda_c\,c(R,b)-\lambda_\ell\,\ell(R,b)\bigr],
\]
computed from the same locked per-benchmark success $s$, a latency $\ell$ (seconds), and a cost per task $c$ (USD); no new experiments are run. In general $V_b$ is benchmark- or deployment-specific: a RouterBench response, BFCL tool call, tau2 service session, and WebArena trajectory need not have equal value. The scalar sweeps below set $V_b=V$ for every benchmark as a normalized sensitivity analysis. Their crossover values are therefore illustrative under that equal-value assumption, not general deployment thresholds. Two additional definitions determine how this utility should be interpreted.

First, the latency statistic. An expected per-task utility decomposes as $\mathbb{E}[VY-\lambda_\ell L]=V\,\mathbb{E}[Y]-\lambda_\ell\,\mathbb{E}[L]$, so only the \emph{mean} latency makes this scalar an expectation; a median would give a different and more SLO-oriented quantity. Mean latency is therefore primary throughout, with the median and the 95th percentile retained as explicitly declared SLO-oriented scenarios rather than relabelled expectations. The latency term is end-to-end, candidate generation plus the router's own measured decision latency, so the routing layer is priced on both axes rather than charged on cost alone. That decision latency is not negligible and differs sharply by router: a median 0.7--1.4\,ms for LiteLLM, 0.21--0.31\,s for Aurelio, 0.85--1.8\,s for vLLM, and 2.1--2.3\,s for RouteLLM. Importantly, RouterBench is a replay-only benchmark and does not include generation latency; its latency is therefore modeled as zero, which means a uniform four-benchmark latency term discounts a quarter of the mixture. We therefore also report a live-only variant over BFCL, tau2-bench, and WebArena.

Second, the cost basis. Beyond candidate model spend and the nominal router-service fee, the run's ledger records \$23.63 of ``external metered'' spend, and its role in per-policy accounting must be explicitly defined. All of it is tau2-bench user-simulator model spend: driving a simulated user requires a second model, so it is a benchmark-execution cost rather than a normal production serving cost. In an ordinary customer-service deployment the user is human; relevant production analogues would be human labor, abandonment, latency, or support escalation. It is also tier dependent in the \emph{opposite} direction to candidate cost, because a cheaper agent takes more turns to finish a session: per candidate cell it averages \$0.0317 at cheap-small against \$0.0236 at mid-general and \$0.0235 at strong-frontier. Charging it matters most to the cost gap the rest of this section turns on. On candidate cost alone Aurelio costs 42.2$\times$ LiteLLM on WebArena and 54.5$\times$ on tau2-bench; once the user-simulator spend both policies must pay is charged, the tau2-bench figure falls to 7.7$\times$ (\$0.2790 against \$0.0364), while WebArena is unchanged because it carries no external metered spend. We report all three bases, and the fixed tiers run no router so they pay no fee under any of them. We fix uniform benchmark weights $w_b$ and a unit cost weight $\lambda_c=1$ (dollars at face value) and sweep the value of a success $V$, adding one column with a latency price $\lambda_\ell=\$0.001$/s (Table \ref{tab:expected-utility}).

Two rankings must be kept apart here, because conflating them misstates what the table shows. \emph{Among the routers}, the ranking is weighting-dependent in the way the success-only mixtures hide: when a correct answer is worth little relative to its cost, LiteLLM leads, because Aurelio's mid-tier default is not worth its cost premium, and only once a success is valued above $V^\star\approx\$0.51$ does Aurelio's higher success rate repay that premium and take the router lead. Only one router leads at the bottom of that range. LiteLLM and RouteLLM make identical selections on every task, but RouteLLM additionally pays a \$0.01 service fee and a 2.1--2.3\,s median decision latency against LiteLLM's 0.7--1.4\,ms, so LiteLLM strictly dominates RouteLLM under this utility at every $V$ we report ($+0.010$ in every column, $+0.012$ once latency is priced). RouteLLM therefore never leads the router-only ranking at any operating point, and the two cheap-selecting routers should not be reported as joint leaders.

\emph{Among all seven policies}, the answer depends on the benchmark set and the cost basis, so we give it scenario by scenario rather than as one headline. Under one canonical scenario---the full four-benchmark set with the router-service fee charged, shown in Table \ref{tab:expected-utility}---Aurelio never leads among all seven policies: the upper envelope of $U(V)$ over the full policy set runs Always-Cheapest, then Always-Mid from $V^\star=\$0.46$, then Always-Strongest from $V^\star=\$3.29$, a fixed-tier declaration maximizes utility throughout that range, and Always-Mid outscores Aurelio in every displayed column ($+0.229$ vs.\ $+0.219$ at $V=0.5$, $+0.544$ vs.\ $+0.535$ at $V=1$, $+0.492$ vs.\ $+0.483$ in the latency-priced column). In that scenario Aurelio becomes the best \emph{router} at about half a dollar per success and never becomes the best \emph{policy}. Charging the external metered spend on top leaves the same three-segment fixed-tier envelope (breakpoints \$0.45 and \$3.29). The statement stops there, and Table \ref{tab:utility-envelope} is where we show it stopping.

Within the four-benchmark, fee-charged scenario, the fixed-tier lead survives the latency statistic. Always-Mid leads the latency-priced column under all three latency statistics (mean $+0.492$, median $+0.509$, p95 $+0.404$), so the fixed tier's advantage is not an artifact of charging the median; under the tail statistic, however, the margin nearly closes and Aurelio drops \emph{below} the cheap-only LiteLLM ($+0.395$ against $+0.396$), which is what pricing a long WebArena tail against a mid-tier default does.

The benchmark set is what the conclusion does not survive, and Table \ref{tab:utility-envelope} reports the boundary rather than burying it. Dropping the zero-latency replay benchmark moves the first crossover out to $\$0.96$ and, in the latency-priced live-only column, hands that lead to Always-Cheapest ($+0.458$ against Always-Mid's $+0.428$), because RouterBench is the benchmark on which the cheap tier does worst. It also puts a router at the top of the envelope. On the live-only set with the fee charged, Always-Mid leads from $\$0.96$ only up to $V^\star=\$8.90$, above which Aurelio takes the unbounded top segment (\$8.88 once the external metered spend is charged as well); on the live-only set under candidate cost alone Aurelio takes everything above $\$0.95$, and Always-Strongest, dominated on both axes there, never appears at all. So a fixed tier leads the whole reported range in two of the six scenarios, both of them four-benchmark scenarios with a fee.

While Aurelio achieves a slightly higher utility under the live-only, candidate-cost basis, this difference is small. The live-only mean success rates for Aurelio and Always-Mid are 0.6148 and 0.6137 respectively, representing a gap of 0.0011, which corresponds to the 0.003 tau2-bench difference across the three live benchmarks. This small advantage is bounded within the protocol-declared $\pm0.05$ equivalence margin (Table \ref{tab:policy-equivalence}). Consequently, the utility-maximizing policy depends on the specific accounting basis and cost-quality exchange rate rather than router superiority.

The fee is the other axis, and it moves the envelope in the same way it moves the Pareto frontier. Even on all four benchmarks, once the fee is removed Aurelio is fractionally cheaper than Always-Mid (it selects a few cheap routes on tau2-bench) and takes the envelope on $[\$0.46, \$3.42)$; the \$0.01 fee is the whole of what hands that segment to Always-Mid. Against that sensitivity, the pairwise router-versus-cheap crossover is stable across every basis: $V^\star$ is \$0.46 on candidate cost alone, \$0.51 at Aurelio's \$0.01 fee, \$0.50 once the external metered spend is charged, and would reach \$0.73 only at the highest \$0.05 fee any router in the pool carries. Charging the user-simulator spend moves the crossover by a single cent, so on this suite the unreconciled ledger entry turns out not to be load-bearing for the ranking, which is a result rather than an assumption. Aurelio's apparent standing is therefore a statement about the cost-quality exchange rate and the accounting assumptions, not a property of the router. The protocol still declares no canonical exchange rate; the table reports where the crossovers lie so a practitioner can locate their own operating point on them.

\begin{table}[t]
\caption{Cost- and latency-aware expected utility $U(R)$ (USD per task), uniform benchmark weights, $\lambda_c=1$, across values of a success $V$; the last column adds $\lambda_\ell=\$0.001$/s on mean latency. Best per column in bold. \emph{Among routers}, the leader flips from LiteLLM to Aurelio at $V^\star\approx\$0.51$; RouteLLM is strictly dominated by LiteLLM. \emph{Among all policies}, a fixed tier leads throughout (Always-Cheapest, then Always-Mid from $\approx\$0.46$, then Always-Strongest from $\approx\$3.29$); see Table~\ref{tab:utility-envelope} for scenarios where this fails.}
\label{tab:expected-utility}
\centering
\small
\resizebox{\columnwidth}{!}{\begin{tabular}{@{}lrrrrr@{}}
\toprule
Router & $V{=}0.1$ & $V{=}0.5$ & $V{=}1$ & $V{=}10$ & $V{=}1,\lambda_\ell$ \\
\midrule
Aurelio Sem. & -0.033 & +0.219 & +0.535 & +6.223 & +0.483 \\
vLLM Sem. & -0.062 & +0.143 & +0.400 & +5.017 & +0.364 \\
LiteLLM & +0.043 & +0.222 & +0.445 & +4.465 & +0.419 \\
RouteLLM & +0.033 & +0.212 & +0.435 & +4.455 & +0.406 \\
Always-Cheap & \textbf{+0.043} & +0.222 & +0.445 & +4.465 & +0.419 \\
Always-Mid & -0.024 & \textbf{+0.229} & \textbf{+0.544} & +6.224 & \textbf{+0.492} \\
Always-Strong & -0.097 & +0.164 & +0.491 & \textbf{+6.379} & +0.451 \\
\bottomrule
\end{tabular}
}
\end{table}

\begin{table*}[t]
\caption{Utility-maximizing policy as $V$ rises, under each cost basis and benchmark set. The external metered basis charges tau2-bench user-simulator spend. The live-only set drops RouterBench (zero replayed latency). A fixed tier leads the whole range in two of the six scenarios; in the other four, Aurelio holds a segment due to either fee removal or RouterBench exclusion. Details in text.}
\label{tab:utility-envelope}
\centering
\small
\resizebox{\textwidth}{!}{\begin{tabular}{@{}lll@{}}
\toprule
Cost basis & Benchmarks & Utility-maximizing policy as $V$ rises \\
\midrule
candidate cost only & all four & Always-Cheapest $\rightarrow$ Aurelio Semantic Router from \$0.46 $\rightarrow$ Always-Strongest from \$3.42 \\
candidate cost only & live only & Always-Cheapest $\rightarrow$ Aurelio Semantic Router from \$0.95 \\
candidate $+$ service fee & all four & Always-Cheapest $\rightarrow$ Always-Mid from \$0.46 $\rightarrow$ Always-Strongest from \$3.29 \\
candidate $+$ service fee & live only & Always-Cheapest $\rightarrow$ Always-Mid from \$0.96 $\rightarrow$ Aurelio Semantic Router from \$8.90 \\
candidate $+$ fee $+$ external metered & all four & Always-Cheapest $\rightarrow$ Always-Mid from \$0.45 $\rightarrow$ Always-Strongest from \$3.29 \\
candidate $+$ fee $+$ external metered & live only & Always-Cheapest $\rightarrow$ Always-Mid from \$0.94 $\rightarrow$ Aurelio Semantic Router from \$8.88 \\
\bottomrule
\end{tabular}
}
\end{table*}

\subsection{Deployment Metric Suite}
\label{sec:metric-suite}
Success and cost decide the ranking, but a router is deployed on more than those two axes. Table \ref{tab:metric-suite} reports the remaining deployment metrics, macro-averaged across the four benchmarks and computed from the same locked routes: tool-call accuracy (on the two tool-use benchmarks), fallback rate, route stability (agreement of the selected tier across routing seeds), mean confidence, success within easy/medium/hard candidate-agreement bands (band $=$ the number of candidate tiers that solve a task, a disclosed outcome-derived proxy rather than the formal task descriptor), and robustness (standard deviation of success across the three outcome replicates). These columns are consistent with the routing-behavior interpretation above. Under this adapter's fallback rule, Aurelio Semantic Router's combination of a 0.91 fallback rate and a 0.03 mean reported confidence means selection is predominantly fallback-based, yet its stability is a perfect 1.00. LiteLLM Router and RouteLLM are identical on every reliability column (fallback 0.00, stability 1.00, matching difficulty-band and robustness values) and differ only in reported confidence (1.00 vs.\ 0.78), a difference that does not alter route selection in these configurations, since both select the cheap tier on every task. Only vLLM Semantic Router shows sub-unity stability (0.99), a non-zero fallback rate (0.09), and the highest success in the hard band (0.19), all consistent with content-conditioned selection. That selection varies with the prompt, but the permutation test above never shows it beating a content-blind allocation with the same shares, and vLLM still trails on the headline metrics. All four routers follow the same difficulty gradient (easy $\approx 0.93$--$0.99$, hard $\approx 0.13$--$0.19$), though the difficulty bands are outcome-derived, since a task's band is the number of tiers that solve it, so the gradient partly restates the graded outcomes rather than supplying independent evidence of robustness. The replicate standard deviations (0.04--0.06) are comparable to several of the between-router success gaps we report (which run from 0.003 to 0.061), so replicate noise is not negligible against the closer comparisons; only the large gaps (RouterBench's 0.383, tau2-bench's 0.270) clear it decisively. Confidence and fallback are also not calibrated across routers. RouteLLM's win-rate score, Aurelio's semantic-similarity score, and LiteLLM's constant 1.0 are different quantities, and a semantic no-match fallback is a different event from a provider-failure fallback, so those two columns should be read within a router, not compared across them.

\begin{table*}[t]
\caption{Deployment metric suite (macro-averaged across benchmarks). Tool accuracy is over BFCL v4 and tau2-bench; stability is cross-seed selection agreement; robustness is the replicate standard deviation of success.}
\label{tab:metric-suite}
\centering
\small
\begin{tabular}{@{}lrrrrrrrr@{}}
\toprule
Router & Tool acc. & Fallback & Stability & Conf. & Easy & Med. & Hard & Robust. std \\
\midrule
Aurelio Semantic Router & 0.81 & 0.91 & 1.00 & 0.03 & 0.99 & 0.83 & 0.16 & 0.039 \\
LiteLLM Router & 0.69 & 0.00 & 1.00 & 1.00 & 0.93 & 0.30 & 0.13 & 0.057 \\
RouteLLM & 0.69 & 0.00 & 1.00 & 0.78 & 0.93 & 0.30 & 0.13 & 0.057 \\
vLLM Semantic Router & 0.75 & 0.09 & 0.99 & 0.64 & 0.95 & 0.34 & 0.19 & 0.049 \\
\bottomrule
\end{tabular}

\end{table*}

\section{Discussion}
\label{sec:discussion}

\textbf{Observed routing behavior rather than a single winner.} Read benchmark by benchmark, no router has the highest observed success rate on every benchmark: Aurelio has the highest observed rate on RouterBench, tau2-bench, and WebArena, LiteLLM and RouteLLM have the highest on BFCL, and vLLM has the highest on none. These placements should be read alongside the selected-tier distributions. Three of the four routers emit a constant or near-constant tier assignment. LiteLLM and RouteLLM select the cheap tier on every task (RouteLLM scores content but never crosses its escalation threshold, so it emits a constant action), and Aurelio selects mid-general on all but a handful of routes through its fallback default. For these constant and near-constant policies, benchmark-level results consequently track the observed performance of their selected tiers under each benchmark's scoring procedure. Mid outperforms cheap on three benchmarks, where the mid-defaulting configuration leads; cheap performs best on BFCL, where the cheap-only configurations lead. This association does not identify a causal routing mechanism, because tier, model, provider, configuration, and benchmark are jointly varied rather than manipulated factorially.

vLLM Semantic Router is the only evaluated configuration whose tier assignments vary substantially with prompt content, and it has the highest observed success rate on none of the four benchmarks. A share-matched tier-label permutation reference analysis (Table \ref{tab:canonical-vllm-share-permutation}) detects no benchmark on which that variation outperforms a content-blind allocation with the same tier shares. The strength of that negative differs by benchmark. On WebArena the targeting benefit is bounded inside the protocol-declared $\pm0.05$ equivalence margin, so the observed result there is statistically compatible, within that margin, with a content-blind allocation using the same tier shares; the verdict does not survive tightening the band to $\pm0.03$ (Table \ref{tab:margin-sensitivity}). On the other three benchmarks the test is inconclusive at every band we report, so a benefit of practical size is unobserved rather than excluded. The mixture itself is a separate question. On WebArena vLLM's 66/34 cheap/mid split produces 0.150 success at \$0.026 candidate spend, an intermediate operating point that no single fixed tier reproduces (Always-Cheapest 0.110 at \$0.002, Always-Mid 0.220 at \$0.087), and it is nondominated there with probability 0.997. For the three near-constant routers, observed success is determined by their realized tier assignments by construction. For vLLM, equivalence to a share-matched assignment is established only for WebArena success at the $\pm0.05$ margin; targeting remains unresolved elsewhere. This is not an artifact of scoring RouteLLM and LiteLLM on a three-tier axis they cannot fully use: restricted to their native binary cheap-versus-strong action space (neither can select mid by construction), both still choose cheap on every one of their 580 routes, so the wider action space is not what makes them fixed-tier selectors. One scope note follows. LiteLLM Router is a production gateway whose stated purpose is reliability, budget enforcement, and fallback under provider failure and load, none of which this capability-focused evaluation induces. Its cheap-tier result measures the axis we test, not the axis it is built for, and we read it as a cost floor on the capability question rather than as evidence the gateway is poorly engineered.

\textbf{Fixed-tier baselines are competitive comparators.} Always-Strongest beats every router on RouterBench and ties the best on WebArena; Always-Cheapest leads BFCL; and once Always-Mid is on the table the tau2-bench case where a router (Aurelio) appeared to edge every fixed tier shrinks to a 0.003 margin over Always-Mid, with the 0.016 margin over Always-Strongest unresolved; the transport-failure cascade is identical to Always-Cheapest.

Because that Always-Mid comparison carries the paper's routing-behavior finding, we test it directly rather than resting on point estimates and an interval against a different baseline. Table \ref{tab:policy-equivalence} reports the paired per-task difference between Aurelio and Always-Mid on every benchmark, with a hierarchical paired 90\% interval, a two-sided task-level sign-flip test under a symmetry assumption and a TOST equivalence test against the protocol-declared $\pm0.05$ margin, both Holm-adjusted across the four benchmarks. On RouterBench, BFCL, and WebArena, the comparison is not a statistical one: Aurelio selects mid-general on every task, making its per-task difference against Always-Mid identically zero. Because these per-task differences on these benchmarks are identically zero, these cases are reported as exact identities without bootstrap intervals. tau2-bench is the only informative case, since it is the only benchmark where Aurelio makes some non-mid selections (3.0\% of routes). Even there, the difference is $+0.003$ (hierarchical paired 90\% interval [0.000, 0.010]), is bounded within the margin at a Holm-adjusted TOST $p<0.001$, and traces to a single task out of 100 whose graded outcome differs at all. So the minority of content-dependent decisions Aurelio does make on tau2-bench buys a difference that is bounded within five success-rate points and attributable to a single task. If a router behaves as a fixed-tier selector, comparing it to an unconditional tier baseline is an identity comparison. This is why the paper reports the baselines and the selected-candidate distribution before the router comparison: without them, Aurelio's three benchmark wins read as evidence of good routing rather than a fortunate fallback default.

\textbf{Rank inversion appears in the point estimates; only its RouterBench leg is statistically resolved.} Proposition \ref{prop:rank} says per-benchmark reporting is mandatory once two routers' success rates invert, and the point estimates show exactly that: Aurelio leads three benchmarks and sits below the cheap-only routers on BFCL. Only one leg of that reversal is resolved, however: Aurelio's RouterBench lead is statistically distinguishable under the primary adjusted test, whereas its BFCL deficit is not (a $-0.022$ gap, $p=1.000$, Aurelio below LiteLLM in only 58\% of bootstrap draws), so the population-level inversion is directional rather than established. The direction of the reversal is firmer than the four-way ranking is statistically settled at these sample sizes (RouterBench and BFCL carry 60 and 30 tasks, tau2-bench and WebArena 100 each), so we read the ordering as directional where the marginal intervals overlap and report the paired risk differences with hierarchical task-clustered bootstrap intervals and Holm-adjusted task-level sign-flip $p$-values throughout (Section \ref{sec:estimand}); the share-matched tier-label permutation reference analysis is used only for vLLM targeting. Under that unit-consistent test, the established router-pair effects are Aurelio (and thus the mid tier) over the cheap-only routers on RouterBench, tau2-bench and WebArena, Aurelio over vLLM on RouterBench and on tau2-bench, and vLLM over the cheap-only routers on tau2-bench; the remaining WebArena pairs and the whole BFCL ordering are directional but not significant after multiplicity control. Every one of those established effects is a tier gap rather than a targeting gap. On RouterBench, tau2-bench and WebArena alike Aurelio's selection is effectively constant at mid-general, so an effect involving it compares the mid tier against whatever the other policy holds, whether that is the cheap tier or vLLM's mixture. What the tests resolve is that the mid tier beats the cheap tier where the candidate pool separates them, and that the resolution runs out where it does not.

\textbf{Auxiliary pool-swap lineage; not part of the canonical four-benchmark comparison.} The nominal cheap slot was replaced, so model, provider, price, and associated metadata changed together. Table \ref{tab:pool-ablation} reports this separate two-benchmark lineage (RouterBench and BFCL v4, an earlier run under the same broad protocol). Static routers do not move their selections: Aurelio's success is unchanged to four decimals (0.758), and LiteLLM and RouteLLM remain cheap-locked. Their cost per task rises roughly $16\times$ (LiteLLM \$0.000038 $\to$ \$0.000618; RouteLLM \$0.000037 $\to$ \$0.000615); observed success does not improve and is slightly lower, but this descriptive difference is not tested. vLLM Semantic Router, whose mix spans tiers, sees its per-task cost move only $1.32\times$. The supported claim is limited to price propagation for policies locked to the replaced slot; this lineage does not identify model capability or causal routing benefit.

This swap does not establish that the cheap-to-mid capability gap has narrowed, and separating the two benchmarks shows why: they are different experiments rather than two instances of one. On RouterBench (Table \ref{tab:pool-ablation-tiers-routerbench}) the replay is keyed by tier rather than by model identity, so not one of its 480 rows can change success under a cheap-slot swap: its cost moves and its success cannot, by construction. That half is a price-slot relabeling and carries no information about the replacement model's capability at all. Only BFCL (Table \ref{tab:pool-ablation-tiers-bfcl}) executes the replacement model, so only BFCL can speak to capability, and there the swapped cheap tier falls from 0.846 to 0.827, a change of 7 rows out of 162, against the unchanged mid tier's own re-execution jitter of 2 rows out of 64. A 7-row shift on one 30-task benchmark does not establish that a capability gap narrowed, not with a 2-row noise floor on a tier whose model did not change and with the provider and model metadata changing together. The direction is wrong for it anyway: measured against the fixed mid tier, the observed cheap-to-mid gap on BFCL widened rather than closed. ``Near-mid'' describes the replacement model's price and market position, not anything measured here. The replay results in Table \ref{tab:pool-ablation-tiers-routerbench} represent a price-slot relabeling and should not be interpreted as validation of the live execution results in Table \ref{tab:pool-ablation-tiers-bfcl}.

One further limitation of those two tables is structural and we can bound its size. Each tier's row is pooled over whichever routers selected that tier, so different tiers are summarized on different task subsets, and a per-tier comparison built that way is selection-dependent rather than a capability measurement. The correct construction uses every task-candidate cell paired by task and replicate, and that construction is unavailable for these lineages: neither ablation bundle records a candidate matrix, only routed rows, so the unrouted cells were never written and cannot be recovered. Table \ref{tab:tier-capability} quantifies this distortion on the canonical bundle, which contains the complete candidate matrix. The result is sharply uneven. Where every task is routed to a tier the conditioning is vacuous and the selection effect is exactly zero; this covers the cheap and mid rows where every task is routed to that tier, while tau2-bench mid is nearly, but not exactly, constant. Where routing is sparse it is severe: BFCL's strong-frontier tier scores 0.800 over all 30 tasks but 0.571 over the 7 tasks any router actually sent there, an artifact of $-0.229$, and WebArena's strong-frontier tier is never selected at all, so no conditioned value for it exists. The lesson transfers directly to the ablation tables: their densely routed cheap and mid rows are usable as descriptive summaries of selected subsets, not as causal tier-capability estimates, and their sparse strong-frontier rows (14 and 20 route rows) must not be read as tier capability in either pool. We report them for completeness and draw no tier comparison from them.

Accordingly, these tables are descriptive rather than a controlled ablation, as swapping only the cheap slot's price does not isolate content-blindness from other variables that change with a model swap. A capability-, price-, provider-, and metadata-controlled ablation that would license a causal claim is left to future work; here we report only that the cheap-locked policies pass the exogenous price change straight to the bill. (Because this ablation uses a different, two-benchmark pool, its absolute numbers are not comparable to the four-benchmark canonical run and are reported only for the within-ablation contrast.)

\begin{table*}[t]
\caption{Direct paired comparison of Aurelio against Always-Mid, the fixed tier it reproduces, on the candidate-cost basis. \emph{Diff.\ tasks} is the number of tasks whose graded outcome differs at all. Where Aurelio selects mid-general on every task the difference is an exact identity rather than an estimate, so no interval is reported. Sup.\ $p$ is the two-sided task-level sign-flip test under a symmetry assumption and TOST $p$ the equivalence test against the protocol-declared $\pm0.05$ margin, both Holm-adjusted across the four benchmarks.}
\label{tab:policy-equivalence}
\centering
\small
\begin{tabular}{@{}lrrlllll@{}}
\toprule
Bench & Tasks & Diff.\ tasks & $\Delta$ success & 90\% interval & Sup.\ $p$ & TOST $p$ & Verdict \\
\midrule
BFCL v4 & 30 & 0 & 0.000 & exact & --- & --- & identical by construction \\
RouterBench & 60 & 0 & 0.000 & exact & --- & --- & identical by construction \\
WebArena & 100 & 0 & 0.000 & exact & --- & --- & identical by construction \\
tau2-bench & 100 & 1 & 0.003 & [0.000, 0.010] & $>$0.999 & $<$0.001 & equivalent \\
\bottomrule
\end{tabular}

\end{table*}

\begin{table*}[t]
\caption{Candidate-pool ablation (auxiliary two-benchmark lineage): swapping only the cheap tier for a more expensive near-mid model. Static routers hold their selections, so their cost tracks the exogenous slot price.}
\label{tab:pool-ablation}
\centering
\small
\begin{tabular}{@{}lrrrrr@{}}
\toprule
Router & Wide succ. & Narrow succ. & Wide USD/task & Narrow USD/task & Narrow/Wide cost \\
\midrule
Aurelio Semantic Router & 0.758 & 0.758 & 0.001934 & 0.001940 & 1.00 \\
LiteLLM Router & 0.567 & 0.550 & 0.000038 & 0.000618 & 16.33 \\
RouteLLM & 0.558 & 0.550 & 0.000037 & 0.000615 & 16.51 \\
vLLM Semantic Router & 0.600 & 0.600 & 0.001162 & 0.001538 & 1.32 \\
\bottomrule
\end{tabular}

\end{table*}

\begin{table}[t]
\caption{RouterBench half of the pool ablation: a \emph{price-slot relabeling}, not a model ablation. $\ast$ marks the swapped tier. The replay is keyed by tier rather than model identity, so none of these 480 rows can change success by construction; only cost moves. No capability claim is available from this half.}
\label{tab:pool-ablation-tiers-routerbench}
\centering
\small
\resizebox{\columnwidth}{!}{\begin{tabular}{@{}lrrrrr@{}}
\toprule
Tier & n & Wide & Narrow & $\Delta$ & Rows changed \\
\midrule
cheap-small $\ast$ & 338 & 0.308 & 0.308 & +0.000 & 0 \\
mid-general & 122 & 0.672 & 0.672 & +0.000 & 0 \\
strong-frontier & 20 & 0.800 & 0.800 & +0.000 & 0 \\
\bottomrule
\end{tabular}
}
\end{table}

\begin{table}[t]
\caption{BFCL half of the pool ablation: the only half that executes the replacement model, on one 30-task benchmark with the provider and model metadata changing together. $\ast$ marks the swapped tier; mid and strong are unchanged models, so their rows set the re-execution noise floor. Rows are pooled over whichever routers selected the tier, so the sparse strong-frontier row is selection-dependent and is not a tier-capability measurement (Table \ref{tab:tier-capability}).}
\label{tab:pool-ablation-tiers-bfcl}
\centering
\small
\resizebox{\columnwidth}{!}{\begin{tabular}{@{}lrrrrr@{}}
\toprule
Tier & n & Wide & Narrow & $\Delta$ & Rows changed \\
\midrule
cheap-small $\ast$ & 162 & 0.846 & 0.827 & -0.018 & 7 \\
mid-general & 64 & 0.812 & 0.812 & +0.000 & 2 \\
strong-frontier & 14 & 0.571 & 0.571 & +0.000 & 0 \\
\bottomrule
\end{tabular}
}
\end{table}

\begin{table*}[t]
\caption{How much a route-conditioned per-tier summary distorts a per-tier capability read, measured on the canonical bundle because it is the only lineage carrying a complete candidate matrix. \emph{Success (all)} uses every task-candidate cell for the tier; \emph{Success (routed)} restricts to tasks some router selected that tier for, reproducing the construction of Tables \ref{tab:pool-ablation-tiers-routerbench} and \ref{tab:pool-ablation-tiers-bfcl}. The effect is exactly zero wherever every task is routed to the tier and large wherever routing is sparse, which is why the sparse rows of those tables carry no capability claim.}
\label{tab:tier-capability}
\centering
\small
\resizebox{\textwidth}{!}{\begin{tabular}{@{}llrlrll@{}}
\toprule
Bench & Tier & All tasks & Success (all) & Routed tasks & Success (routed) & Selection effect \\
\midrule
BFCL v4 & cheap-small & 30 & 0.833 & 30 & 0.833 & 0.000 \\
BFCL v4 & mid-general & 30 & 0.811 & 30 & 0.811 & 0.000 \\
BFCL v4 & strong-frontier & 30 & 0.800 & 7 & 0.571 & -0.229 \\
RouterBench & cheap-small & 60 & 0.300 & 60 & 0.300 & 0.000 \\
RouterBench & mid-general & 60 & 0.683 & 60 & 0.683 & 0.000 \\
RouterBench & strong-frontier & 60 & 0.800 & 10 & 0.800 & 0.000 \\
WebArena & cheap-small & 100 & 0.110 & 100 & 0.110 & 0.000 \\
WebArena & mid-general & 100 & 0.220 & 100 & 0.220 & 0.000 \\
WebArena & strong-frontier & 100 & 0.220 & 0 & --- & n/a \\
tau2-bench & cheap-small & 100 & 0.543 & 100 & 0.543 & 0.000 \\
tau2-bench & mid-general & 100 & 0.810 & 97 & 0.808 & -0.002 \\
tau2-bench & strong-frontier & 100 & 0.797 & 20 & 0.850 & 0.053 \\
\bottomrule
\end{tabular}
}
\end{table*}

\textbf{Threshold reconfiguration can change the assignment substantially; this experiment does not establish a quality benefit from it.} The two routers with an exposed threshold parameter were re-measured across it, to test how much of their near-constant behavior is attributable to a shipped default. Tables \ref{tab:threshold-sweep-rb} and \ref{tab:threshold-sweep-bfcl} report the sweep: RouteLLM's strong-versus-weak escalation threshold (default 0.50) and Aurelio's semantic score threshold (default 0.30), on the canonical pool and frozen task set with only the threshold varied.

One point of construction governs how the sweep is scored. The sweep lineage executed live model calls separately at each threshold, so its own graded outcomes carry per-threshold generation noise, under which two thresholds that select an identical tier for every task can still report different success and different cost. We therefore take only the route decisions from that lineage and score every threshold against the single locked candidate matrix, joining each decision to the frozen outcome and cost for the tier it selected. Success and cost are then a deterministic function of the tier vector, so identical decisions give identical numbers by construction. The analysis asserts this rather than assuming it, together with the requirement that each router's shipped-default row reproduce its canonical per-benchmark success exactly. Validation holds: the sweep's default-threshold decisions match the canonical routes on 360 RouterBench rows and 180 BFCL rows (540 joined rows total); routing is reproducible, while outcome generation is not. The family is 14 comparisons against a default: two benchmarks, with four off-default thresholds for RouteLLM and three for Aurelio on each. Both the superiority and the equivalence tests are Holm-adjusted across that family, and the sweep is reported as exploratory. Its superiority procedure is the same task-level sign-flip test under a symmetry assumption that the main analysis uses, but its interval differs in one respect we state rather than leave for a reader to find: because a re-scored decision joins to the frozen matrix rather than carrying its own replicates, the sweep resamples tasks from their per-task means without the inner replicate stage of Section \ref{sec:estimand}. That omits the within-task output variation, which is a further reason to read these intervals as exploratory rather than as comparable to the main tables.

Two distinct behaviors appear, and only one of them produces a resolved quality change. RouteLLM \emph{relocates} wholesale. Its RouterBench success is bimodal, 0.300 at every threshold at or above 0.30 and 0.800 at every threshold at or below 0.20, that is exactly Always-Cheapest or exactly Always-Strongest with no intermediate operating point: at the 0.30-to-0.20 step all 60 tasks flip together from cheap-small to strong-frontier, mid-general is selected at no threshold, and no threshold produces a mixed split. The paired difference at that step is large and survives the correction ($+0.500$, 90\% interval [0.367, 0.633], Holm-adjusted $p=0.001$), so reconfiguration here genuinely does buy quality. It buys it by becoming a different fixed tier, not by discriminating between tasks. The same relocation on BFCL moves the wrong way and is unresolved ($-0.033$ [$-0.133$, 0.067]), which is what a tier switch looks like on the benchmark where cheap is already the best tier.

Aurelio on BFCL shows the other behavior, genuine \emph{reassignment}: lowering the cutoff moves the tier for 5, 18, and 19 of 30 tasks, up to two-thirds of the benchmark, with real three-way splits (0.40/0.37/0.23 at the lowest cutoff). This represents task-level variation in the assignment rather than a relabeled fixed tier, demonstrating that the shipped default is what makes Aurelio's canonical action constant: lowering the threshold produces a mixed policy. What the reassignment does not produce is a resolved change in graded outcome. The paired differences are $-0.033$ [$-0.100$, 0.000] and 0.000 [$-0.067$, 0.067] twice, every one inconclusive after adjustment at 30 tasks, so a benefit of practical size is neither demonstrated nor excluded. On RouterBench, by contrast, Aurelio barely moves at all (3 to 4 of 60 tasks reassigned, success 0.683 to 0.700, $+0.017$ [$-0.033$, 0.067]): there its binding constraint is reference-utterance coverage rather than the score cutoff, since three short utterances per tier cannot match these prompts at any threshold. So the coverage-not-cutoff reading is specific to RouterBench and does not extend to BFCL, where the cutoff plainly does bite.

The sweep therefore supports a narrower conclusion than either ``reconfiguration cannot create routing'' or ``the default is irrelevant.'' Reconfiguration can create materially variable task assignments, and on Aurelio's BFCL leg it does; what this experiment does not establish is a reliable quality benefit from those assignments. The one large resolved gain comes from relocating to a different fixed tier, and the one case of substantial per-task reassignment leaves the graded outcome unresolved. Whether a finer threshold could help on BFCL is not settled by 30 tasks. Note also what the unchanged-decision rows are: thresholds that change no routing decision at all, whose paired difference is therefore an exact identity rather than merely small.

\begin{table*}[t]
\caption{Operating-point sweep on RouterBench. Shipped defaults in bold. C/M/S is the route share per tier; \emph{Chg.} the number of tasks differing from default. Success is joined to the locked canonical matrix; route-level candidate costs are retained in the associated artifact. Sup.\ $p$ and TOST $p$ are Holm-adjusted across the sweep's 14 non-default comparisons ($\pm0.05$ margin).}
\label{tab:threshold-sweep-rb}
\centering
\small
\resizebox{\textwidth}{!}{\begin{tabular}{@{}llrrlllll@{}}
\toprule
Router & Thr. & C/M/S & Chg. & Success [95\%] & Paired $\Delta$ vs default [90\%] & Sup.\ $p$ & TOST $p$ & Verdict \\
\midrule
Aurelio & \textbf{0.30} & 0.00/1.00/0.00 & 0 & 0.683 [0.567, 0.800] & --- & --- & --- & (default) \\
Aurelio & 0.20 & 0.03/0.95/0.02 & 3 & 0.700 [0.583, 0.817] & 0.017 [-0.033, 0.067] & $>$0.999 & $>$0.999 & inconclusive \\
Aurelio & 0.10 & 0.03/0.93/0.03 & 4 & 0.700 [0.583, 0.817] & 0.017 [-0.033, 0.067] & $>$0.999 & $>$0.999 & inconclusive \\
Aurelio & 0.05 & 0.03/0.93/0.03 & 4 & 0.700 [0.583, 0.817] & 0.017 [-0.033, 0.067] & $>$0.999 & $>$0.999 & inconclusive \\
RouteLLM & \textbf{0.50} & 1.00/0.00/0.00 & 0 & 0.300 [0.183, 0.417] & --- & --- & --- & (default) \\
RouteLLM & 0.40 & 1.00/0.00/0.00 & 0 & 0.300 [0.183, 0.417] & 0.000 [0.000, 0.000] & $>$0.999 & $<$0.001 & equivalent \\
RouteLLM & 0.30 & 1.00/0.00/0.00 & 0 & 0.300 [0.183, 0.417] & 0.000 [0.000, 0.000] & $>$0.999 & $<$0.001 & equivalent \\
RouteLLM & 0.20 & 0.00/0.00/1.00 & 60 & 0.800 [0.700, 0.900] & 0.500 [0.367, 0.633] & 0.001 & $>$0.999 & higher success \\
RouteLLM & 0.10 & 0.00/0.00/1.00 & 60 & 0.800 [0.700, 0.900] & 0.500 [0.367, 0.633] & 0.001 & $>$0.999 & higher success \\
\bottomrule
\end{tabular}
}

\end{table*}

\begin{table*}[t]
\caption{Operating-point sweep on BFCL v4, columns as in Table \ref{tab:threshold-sweep-rb}. This is the leg where a threshold change produces genuine task-level reassignment rather than a relabeled fixed tier: Aurelio moves the tier for up to 19 of 30 tasks with a real three-way split, and every resulting difference in graded outcome is inconclusive after adjustment.}
\label{tab:threshold-sweep-bfcl}
\centering
\small
\resizebox{\textwidth}{!}{\begin{tabular}{@{}llrrlllll@{}}
\toprule
Router & Thr. & C/M/S & Chg. & Success [95\%] & Paired $\Delta$ vs default [90\%] & Sup.\ $p$ & TOST $p$ & Verdict \\
\midrule
Aurelio & \textbf{0.30} & 0.00/1.00/0.00 & 0 & 0.811 [0.667, 0.933] & --- & --- & --- & (default) \\
Aurelio & 0.20 & 0.03/0.83/0.13 & 5 & 0.778 [0.622, 0.911] & -0.033 [-0.100, 0.000] & $>$0.999 & $>$0.999 & inconclusive \\
Aurelio & 0.10 & 0.37/0.40/0.23 & 18 & 0.811 [0.667, 0.933] & 0.000 [-0.067, 0.067] & $>$0.999 & $>$0.999 & inconclusive \\
Aurelio & 0.05 & 0.40/0.37/0.23 & 19 & 0.811 [0.667, 0.933] & 0.000 [-0.067, 0.067] & $>$0.999 & $>$0.999 & inconclusive \\
RouteLLM & \textbf{0.50} & 1.00/0.00/0.00 & 0 & 0.833 [0.700, 0.967] & --- & --- & --- & (default) \\
RouteLLM & 0.40 & 1.00/0.00/0.00 & 0 & 0.833 [0.700, 0.967] & 0.000 [0.000, 0.000] & $>$0.999 & $<$0.001 & equivalent \\
RouteLLM & 0.30 & 1.00/0.00/0.00 & 0 & 0.833 [0.700, 0.967] & 0.000 [0.000, 0.000] & $>$0.999 & $<$0.001 & equivalent \\
RouteLLM & 0.20 & 0.00/0.00/1.00 & 30 & 0.800 [0.633, 0.933] & -0.033 [-0.133, 0.067] & $>$0.999 & $>$0.999 & inconclusive \\
RouteLLM & 0.10 & 0.00/0.00/1.00 & 30 & 0.800 [0.633, 0.933] & -0.033 [-0.133, 0.067] & $>$0.999 & $>$0.999 & inconclusive \\
\bottomrule
\end{tabular}
}

\end{table*}

\phantomsection
\subsection*{Threats to Validity}
The main external-validity threat is scope. All statistical summaries are conditional on one author-defined three-model candidate pool and frozen task samples: BFCL contributes 30 sampled tasks, while tau2-bench and WebArena use deterministic convenience prefixes. Tier labels jointly encode model family, provider, price, API behavior, prompt compatibility, and tool-call conventions, so the study cannot identify which component causes a tier effect or generalize the observed ordering to other pools. The auxiliary pool analysis is not controlled and does not remove this threat. Likewise, the central fixed-tier behavior characterizes evaluated adapters and defaults rather than intrinsic properties of each package: Aurelio's reference utterances, easy/medium/hard mapping, and mid-tier fallback are author supplied, while RouteLLM's default threshold was not calibrated to this candidate pair. A held-out calibration or factorial configuration study would be needed to support vendor-recommended or configuration-optimized comparisons. Replay also excludes router-specific prompt transformations, post-generation fallback, provider-load effects, and route-conditioned trajectories. Accordingly, empirical claims concern these router configurations, adapters, candidate models, and evaluation controls; they are not claims about routing paradigms in general. Broader task samples, tuned-router comparisons, and additional candidate pools are needed to assess external validity.

The foundational threat is adapter fidelity: each adapter must reflect its router's real mechanism, not the integration code's distortion of it. We mitigated this by checking per-task \texttt{selected\_candidate}, \texttt{confidence}, and \texttt{fallback\_path} traces against each adapter's source, not just aggregate tables, and it mattered. This check caught a failure in the evaluated vLLM deployment path: when its local stack was unreachable, the service accepted the routing configuration but returned a single-tier default rather than executing the intended routing behavior. Every vLLM number here comes from the corrected deployment, verified directly against the service's own decision logs, with \texttt{router\_service\_error} occurring in 0 of 2{,}320 locked route rows. Because vLLM and the other routers are live, externally maintained services that can change after this paper's run, any future use of this artifact should repeat the same audit rather than assume today's fix still holds. The second threat is sample-size heterogeneity: the task-clustered inference rests on 30, 60, 100, and 100 frozen tasks (180, 360, 600, and 600 joined outcomes) for BFCL, RouterBench, tau2-bench, and WebArena; the task-level paired-bootstrap CIs report its consequence directly, and the four-way ranking is directional where they overlap. A third threat is that the cross-benchmark consistency claim rests on only three or four suites, so ``no router combines best mean rank with lowest variance'' is directional evidence from these benchmarks, not a general property. A fourth threat is construct validity of the routing granularity: every router decides once per task or session (Table \ref{tab:granularity}), so this study evaluates task/session-level model selection, not per-step in-trajectory routing. It therefore does not test whether a router adapts to a failed action, a tool response, later-trajectory difficulty, or its own intermediate confidence, all capabilities a per-step condition would probe. Establishing those would require a per-step routing mode that none of the four evaluated adapters currently implements, so the results here bound only the coarser decision the routers actually expose.

\section{Conclusion}
\label{sec:conclusion}

We presented a common interface, deployment-focused metric suite, and hybrid protocol for evaluating task- and session-level model routing across four router implementations and four benchmarks. Main result is not universally best router. Peak success, cross-benchmark consistency, and task-specific targeting are distinct questions, and fixed-tier controls expose those differences. Under evaluated configurations, candidate pool, adapters, and frozen samples, Always-Mid matches Aurelio exactly on three benchmarks and within 0.003 on the fourth. vLLM is only router with substantial prompt-dependent tier variation, yet no benchmark shows resolved task-specific superiority over its share-matched content-blind allocation. Observed gains therefore track selected-tier composition more closely than demonstrated targeting quality.

This is a measurement result, not a verdict on routing paradigms. External validity is limited by 30 BFCL tasks, deterministic tau2-bench and WebArena prefixes, one three-model candidate pool, package defaults and author-supplied adapter settings, an uncalibrated RouteLLM threshold, and replay conditions that omit prompt transformations, post-generation fallback, provider-load effects, and route-conditioned trajectories. Stronger claims require broader task samples, additional candidate pools, tuned-router comparisons, and per-step evaluations. Within its declared scope, the practical recommendation is clear: report fixed-tier baselines, selected-tier distributions, benchmark-specific uncertainty, and separate router from candidate costs before attributing aggregate gains to routing.

\section*{Data and Code Availability}
The evaluation harness, frozen task identifiers, router configurations, aggregate candidate outcomes, analysis protocol, checksums, and reproduction scripts are available at \url{https://github.com/knkumar/router_benchmark}. Raw provider outputs are distributed subject to applicable licensing and usage restrictions.

\appendices
\section{Reproducibility}
The appendix documents the locked artifact and validation lineage.
Every number in this paper is reproducible from the artifact, from a named frozen lineage in each case: the canonical four-benchmark bundle regenerates the main tables, while each auxiliary result (the two-benchmark pool ablation and the operating-point threshold sweep) is reproducible from its own separately manifested lineage rather than the canonical bundle. A single \texttt{reproduce\_canonical} entry point regenerates the analysis tables from the locked bundle under its recorded protocol, so the canonical build is not sensitive to which bundle a caller happens to point at, and each auxiliary lineage ships its own manifest and checksums. The pipeline is tied to fixed model IDs (cheap-small $=$ gpt-5.4-nano, mid-general $=$ claude-sonnet-4-6, strong-frontier $=$ claude-opus-4-8), real dated per-token pricing (snapshot 2026-07-02), and frozen task IDs (RouterBench 60, BFCL v4 30, tau2-bench 100, WebArena 100), whose eligible populations, selection methods, and seeds are in Table \ref{tab:task-selection}. Canonical validation checks schema, frozen coverage, trace lineage, named costs, and \texttt{cache\_flag=false} for every candidate row.

The locked bundle is \texttt{paper1\_\allowbreak{}canonical\_\allowbreak{}webarena\_\allowbreak{}repair\_\allowbreak{}v2} under protocol \texttt{paper1-\allowbreak{}rebuild-\allowbreak{}webarena-\allowbreak{}repair-\allowbreak{}v2}, created 2026-07-23T22:57:37Z, and it ships a SHA-256 checksum for each of its eight files. The two that carry every reported outcome are \texttt{candidate\_\allowbreak{}outcomes.csv} (\texttt{f5fc349f}\ldots) and \texttt{routes.csv} (\texttt{54898cad}\ldots); the remaining six files are checksummed alongside them in \texttt{checksums.\allowbreak{}sha256}, which is the file to verify a copy against. Table \ref{tab:package-versions} pins the software the run executed under. The evaluation environment is Python 3.11.0; the four router packages and the two graders whose versions can change a reported number are listed there, and the WebArena harness runs in a separate pinned environment because its dependency stack is incompatible with the main one. All model identifiers, the 2026-07-02 pricing snapshot, per-router service-fee bases, and the embedding timeout and retry settings applied to Aurelio and RouteLLM are recorded in the bundle's \texttt{router\_configs.json}; the frozen task IDs, seeds, and per-benchmark route and outcome row counts are in \texttt{manifest.json}.

\begin{table}[t]
\caption{Pinned software versions for the canonical run. The WebArena harness runs in a separate vendored environment (\texttt{openai==0.27.0}, \texttt{transformers==4.33.2}) because its stack is incompatible with the main one; its browser build is the binding constraint repaired before this run.}
\label{tab:package-versions}
\centering
\small
\resizebox{\columnwidth}{!}{\begin{tabular}{@{}lll@{}}
\toprule
Component & Package & Version \\
\midrule
RouteLLM & \texttt{routellm} & 0.2.0 \\
Aurelio Semantic Router & \texttt{semantic-router} & 0.1.15 \\
LiteLLM Router & \texttt{litellm} & 1.90.3 \\
vLLM Semantic Router & \texttt{vllm-sr} & 0.3.0 \\
\midrule
BFCL v4 grader & \texttt{bfcl-eval} & 2026.3.23 \\
WebArena browser & \texttt{playwright} & 1.32.1 (Chromium 1055) \\
\midrule
Model clients & \texttt{anthropic} / \texttt{openai} & 0.116.0 / 2.44.0 \\
Analysis stack & \texttt{numpy} / \texttt{pandas} & 1.26.4 / 3.0.3 \\
\bottomrule
\end{tabular}
}
\end{table}

Observed candidate-generation spend is \$299.04753042, router-service spend \$40.60000000, and external metered spend \$23.62602300, for \$363.27355342 total; infrastructure cost was not recorded. The external metered figure is reconciled to policies rather than left as an aggregate: every cent of it is tau2-bench user-simulator model spend, recorded per candidate cell, and it is charged per selected cell under the third cost basis of Section \ref{sec:expected-utility}. Because it is benchmark-execution cost that varies with selected agent tier, excluding it would change reported benchmark-cost accounting and favor policies that produce longer simulated sessions. The per-cell ledger lives in the execution stage that produced the bundle rather than in the bundle schema, so the allocation step validates that its cell keys match the locked candidate matrix exactly (2{,}610 for 2{,}610) before joining. Reproducibility artifacts include the frozen protocol, bundle manifest and checksums, router configurations, saved paired-bootstrap draws, baseline reconciliation, BFCL route equivalence, the selected-candidate distribution, the all-policy Pareto frontier under both cost bases, the utility upper-envelope breakpoints, the share-matched permutation and paired equivalence output, the per-tier candidate-pool diagnostics, the paired threshold-sweep re-analysis and its lineage manifest, the oracle/cascade, regret, and success-per-dollar analyses, and the observed spend ledger, generated from the locked bundle. The subsequent appendix section reports generated full-rebuild evidence from the locked bundle, including the execution matrix, baseline tables, paired effects, rank and Pareto uncertainty, route-equivalence output, and artifact manifest.

\subsection{Router Configurations}
The canonical run uses each router's package defaults, wired through our specified adapters, against the common three-tier pool. RouteLLM runs the \texttt{sw\_ranking} win-rate predictor over the task prompt and routes to strong-frontier iff the predicted strong win-rate meets its escalation threshold, left at the package default 0.50; the pool exposes cheap-small and strong-frontier as its weak/strong endpoints. Aurelio Semantic Router uses \texttt{semantic-router} with the \texttt{text-embedding-3-small} encoder and three reference utterances per tier under an easy/medium/hard framing, per-route \texttt{score\_threshold} 0.30 and ``max'' score aggregation; when no tier's utterances clear the threshold it takes its fallback default (mid-general), which is why its fallback rate is high on benchmarks whose prompts its utterance set does not cover. LiteLLM Router uses \texttt{cost-based-routing}, which selects by configured \$/token only and reads no prompt content. vLLM Semantic Router uses its Mixture-of-Models probe over the prompt. Each \texttt{route()} sees the task prompt, the candidate pool, and each candidate's configured cost; no router is given the benchmark identity, the grader, or any held-out label. The two thresholds are the only configurable escalation parameters; both are left at their package defaults rather than tuned on a held-out split, so the operating-point sweep in Section \ref{sec:discussion} reports the full range of their effect directly instead of a single calibrated value. ``Shipped configuration'' here means each router's package defaults wired through our specified adapter, not a vendor-tuned deployment. The three reference utterances per tier, the easy/medium/hard framing, and the mid-general fallback are author-supplied adapter choices, so Aurelio's fallback rate reflects our adapter as much as the package. RouteLLM's default 0.50 threshold is applied to a cheap/strong candidate pair it was not calibrated on. The results therefore characterize these routers under our specified adapters and package defaults, not the best configuration each could reach with per-benchmark tuning. All four \texttt{router\_configs} are shipped in the artifact.

\section{Canonical Full-Rebuild Evidence}
\label{app:canonical-full-rebuild}

This appendix is generated from the locked full-rebuild bundle and its canonical analysis outputs. It is absent from builds that have not completed canonical validation.

\begin{table}[!t]
\caption{Canonical full-rebuild matrix}
\label{tab:canonical-rebuild-summary}
\centering
\resizebox{\columnwidth}{!}{\begin{tabular}{@{}lrrrr@{}}
\toprule
Benchmark & Tasks & Replicates & Candidate rows & Route rows \\
\midrule
BFCL v4 & 30 & 3 & 270 & 240 \\
RouterBench & 60 & 3 & 540 & 480 \\
WebArena & 100 & 3 & 900 & 800 \\
tau2-bench & 100 & 3 & 900 & 800 \\
\bottomrule
\end{tabular}
}
\end{table}

\end{document}